\documentclass[letterpaper]{article} 
\usepackage[preprint]{aaai2027}  
\usepackage[hyphens]{url}  
\usepackage{graphicx} 
\usepackage{natbib}  
\usepackage{caption} 
\usepackage{algorithm}
\usepackage{algorithmic}
\usepackage{booktabs} 
\usepackage[table]{xcolor} 
\usepackage{multirow}
\usepackage{booktabs}
\usepackage{array}
\usepackage{amsmath}
\usepackage{amsfonts}
\usepackage{amssymb}
\usepackage{amsthm}
\usepackage{bm}
\usepackage{mathtools}
\usepackage[capitalise,noabbrev]{cleveref}
\usepackage{sectsty}
\usepackage{multirow}
\usepackage{caption}

\newtheorem{theorem}{Theorem}[section]
\newtheorem{proposition}[theorem]{Proposition}
\newtheorem{corollary}[theorem]{Corollary}

\theoremstyle{definition}

\newtheorem{example}[theorem]{Example}

\theoremstyle{remark}
\newtheorem{remark}[theorem]{Remark}

\newcommand{\R}{\mathbb{R}}

\newcommand{\norm}[1]{\left\lVert #1 \right\rVert}

\newcommand{\relu}[1]{\left[#1\right]_{+}}
\newcommand{\diag}{\operatorname{diag}}
\newcommand{\col}{\operatorname{col}}
\newcommand{\sign}{\operatorname{sign}}

\newcommand{\cond}{\kappa}
\newcommand{\TP}{\mathrm{TP}}
\newcommand{\Raw}{\mathrm{Raw}}
\newcommand{\CI}{\mathrm{CI}}
\newcommand{\CD}{\mathrm{CD}}

\newcounter{secondcontribfn}
\newcommand{\secondcontrib}{%
    \ifnum\value{secondcontribfn}=0%
        \footnote{These authors contributed equally.}%
        \setcounter{secondcontribfn}{\value{footnote}}%
    \else%
        \footnotemark[\value{secondcontribfn}]%
    \fi%
}

\allowdisplaybreaks

\definecolor{LightGray}{gray}{0.9}
\newcolumntype{g}{>{\columncolor{gray!15}}c}
\usepackage{newfloat}
\usepackage{listings}
\DeclareCaptionStyle{ruled}{labelfont=normalfont,labelsep=colon,strut=off} 
\floatstyle{ruled}
\newfloat{listing}{tb}{lst}{}
\floatname{listing}{Listing}

\title{CryptoL: Towards Scale Dominance and Physics Constraints Mitigation in Financial Multivariate Time Series Forecasting}
\author{
    Yalda Taheri\textsuperscript{\rm 1}\equalcontrib,
    Mohammad Hassan Heydari\textsuperscript{\rm 2}\equalcontrib,
    Armon Rasooli\textsuperscript{\rm 3}\secondcontrib,\\
    Maryam Amirshahkarami\textsuperscript{\rm 2}\secondcontrib,
    Mohammad Ebrahim Mahdavi\textsuperscript{\rm 2}\secondcontrib,
    Hossein Karshenas\textsuperscript{\rm 2}
}
\affiliations{
    \textsuperscript{\rm 1}Faculty of Engineering, Azad University\\
    \textsuperscript{\rm 2}Faculty of Computer Engineering, University of Isfahan\\
    \textsuperscript{\rm 3}Department of Electrical Engineering, Iran University of Science and Technology\\
    y.jaliltaheri@iau.ir, m.heydari@mehr.ui.ac.ir, a.rasouli@ec.iut.ac.ir\\
    maryamamirshahkarami@mehr.ui.ac.ir, mhm.ebrh.mahdavi@gmail.com, h.karshenas@eng.ui.ac.ir
}

\begin{document}

\maketitle

\begin{abstract}
Cryptocurrency forecasting presents a distinctive combination of extreme cross-asset scale heterogeneity, non-stationary dynamics, and structural dependencies among Open–High–Low–Close (OHLC) variables. We present CryptoL, a unified framework designed to address these challenges within multivariate time-series forecasting. CryptoL evaluates forecasting error in context-normalized coordinates within the RevIN pipeline, preventing inverse normalization from introducing an additional squared-scale weighting into the MSE objective. We formally characterize this effect through the empirical risk and parameter-gradient geometry, establishing the conditions under which large-scale assets can disproportionately influence shared-model optimization. Beyond loss-space normalization, CryptoL examines channel-independent and channel-dependent normalization for OHLC data, showing that a shared channel-dependent affine transformation preserves candle-order relations that independent channel transformations need not preserve. The framework further incorporates scale-adaptive numerical stabilization to reduce distortions caused by a fixed normalization constant across assets spanning many orders of magnitude, together with a soft feasibility loss that penalizes violations of the defining OHLC inequalities. Experiments across heterogeneous cryptocurrency assets evaluate these components through controlled ablations and demonstrate improvements in forecasting accuracy, training stability, and the frequency of financially valid OHLC predictions relative to the considered baselines. CryptoL therefore provides an integrated approach to scale-balanced optimization, structure-preserving normalization, numerical stabilization, and constraint-aware cryptocurrency forecasting.
\end{abstract}


\section{Introduction}

Multivariate financial forecasting using open–high–low–close (OHLC) data provides a detailed representation of price evolution, but predicting these highly coupled variables jointly is challenging due to abrupt regime changes and strict physical candlestick rules \cite{ohlcmultivariate, ohlcmultivariate2, ohlcbitcoin, uncons}. This task is especially demanding in cryptocurrency markets, which exhibit extreme non-stationarity and cross-asset scale heterogeneity \cite{ohlcbitcoin, revin, revinanalysis}. Models must learn simultaneously from assets ranging from fractions of a cent to tens of thousands of dollars, making joint optimization difficult when using standard forecasting backbones that struggle with temporal distribution shifts and scale differences \cite{revin, san, fan, revinanalysis, gtt}.

\begin{figure*}[t]
    \centering
    \includegraphics[width=\textwidth]{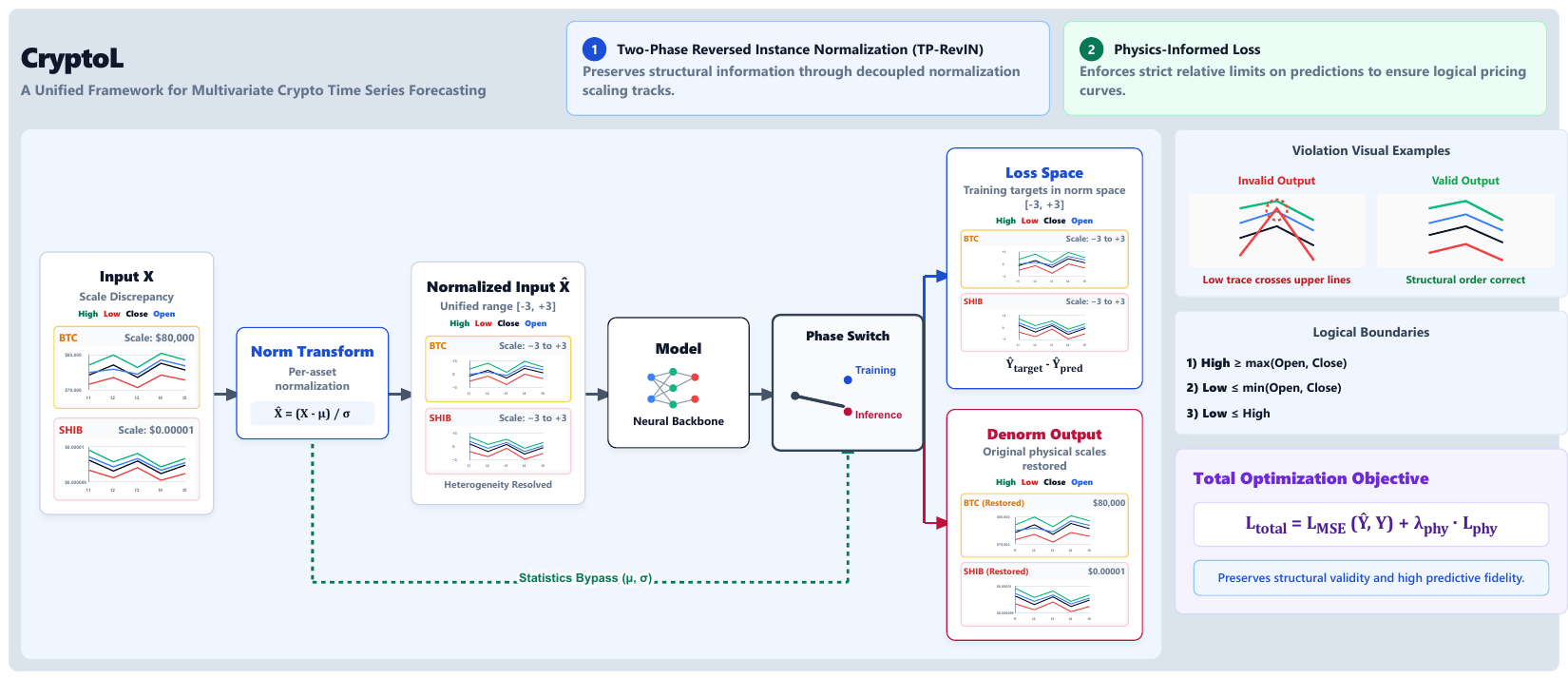}
    \caption{CryptoL framework}
    \label{fig:cryptol}
\end{figure*}

While Reversible Instance Normalization (RevIN) is commonly used to mitigate distribution shifts, its implementation details have significant structural and numerical consequences for OHLC data \cite{revin, ohlcbitcoin, oflcmachinelearning}. Channel-independent standardization can disrupt the relative ordering of candlestick channels, whereas channel-dependent normalization preserves these essential relationships. Additionally, standard RevIN evaluates forecasting errors in the original physical scale, which implicitly prioritizes high-value assets during shared model optimization \cite{revin, revinanalysis, san, fan, gtt}. Training directly in normalized target space which is referred to as Two-Phase RevIN (TP-RevIN), removes this scale-dependent weighting, while scale-adaptive stabilization prevents fixed normalization constants from distorting low-valued series \cite{revinanalysis, gtt}.

Even with proper normalization, models can still produce financially inadmissible predictions that violate core candlestick inequalities \cite{uncons, ohlcbitcoin}. Previous approaches addressed this structural consistency through specialized architectures or transformations, but these are often difficult to integrate with modern time-series foundation models. To address these limitations, we present \textsc{CryptoL}, a unified framework that combines structure-aware normalization, scale-balanced optimization via TP-RevIN, and an auxiliary physics-informed loss. This integrated approach discourages invalid candlestick structures while maintaining numerical stability and forecasting accuracy across diverse asset scales.

The contributions of this work are summarized as follows:
\begin{itemize}
    \item We systematically analyze channel-independent and channel-dependent RevIN for multivariate OHLC forecasting, demonstrate their different effects on candlestick structure, and investigate fixed and scale-adaptive epsilon formulations for stable normalization across assets with extreme numerical ranges.

    \item We provide a theoretical and extensive empirical analysis of TP-RevIN, equivalently normalized-space MSE training, showing how the location of the training objective affects scale-induced optimization imbalance in heterogeneous cryptocurrency forecasting.

    \item We introduce an auxiliary physics-informed loss that penalizes violations of the intrinsic OHLC ordering constraints and compare the resulting framework with unconstrained forecasting, conventional RevIN variants, and alternative normalization approaches.
\end{itemize}

\section{Related Work}

Multivariate financial forecasting aims to model temporal, cross-variable, and cross-asset dependencies jointly \cite{ohlcbitcoin, ohlcmultivariate, ohlcmultivariate2, oflcmachinelearning}. Classical approaches have used state-space models, Kalman filtering, and dynamic conditional correlation models, while recent methods employ recurrent networks, autoencoders, Transformers, and MLP-based architectures. StockMixer models indicator, temporal, and stock-level interactions, whereas foundation-model studies have shown that related multivariate inputs can improve financial forecasting \cite{stockmixer}. DPP instead learns representations directly from decomposed candlestick charts to predict future price movements \cite{dpp}.

OHLC forecasting introduces additional structural requirements because valid candlesticks must satisfy fixed relationships among the open, high, low, and close prices \cite{ohlcbitcoin, oflcmachinelearning, uncons}. Existing studies have addressed these constraints through invertible transformations that guarantee valid reconstructed outputs \cite{uncons}, as well as hybrid autoencoder and multitask architectures designed to model channel dependencies \cite{ohlcautoencoder}. Other work has incorporated the timestamps of OHLC events to enrich the bar representation \cite{ohlcmultivariate, ohlcmultivariate2, ohlcbitcoin, ohlc3}. These studies indicate that OHLC forecasting is not a standard multivariate regression problem, since predictions must preserve the internal structure of each candlestick.

Normalization methods have been widely studied for handling non-stationarity and distribution shift in time-series forecasting. RevIN removes instance-specific statistics before forecasting and restores them afterward \cite{revin}. SAN extends this principle through local temporal-slice normalization \cite{san}, while FAN uses dominant frequency components to address both trend and seasonal non-stationarity \cite{fan}. However, most normalization studies focus on temporal distribution changes and give less attention to how the normalization axis affects structured multivariate outputs such as OHLC channels.

Recent work has further distinguished representation normalization from the coordinate system in which the training loss is evaluated. GTT trains on targets normalized using input-context statistics, thereby learning directly in a unified curve-shape space \cite{gtt}. A comparative study of normalization in foundation models showed that MSE and MAE remain scale sensitive when predictions are denormalized before loss calculation \cite{comparative}. Another recent analysis of RevIN compared conventional and normalized backpropagation through normalized MSE and reported benefits for heterogeneous-scale data \cite{revinanalysis}. Building on these directions, our work jointly studies OHLC-aware normalization, normalized-space optimization, adaptive numerical stabilization, and auxiliary enforcement of candlestick validity.

\section{Methodology}

\subsection*{Preliminaries and Problem Definition}

Let
\begin{equation}
\begin{aligned}
\mathcal{D}
&=
\left\{
\left(\mathbf{X}_n,\mathbf{Y}_n\right)
\right\}_{n=1}^{N},\\
\mathbf{X}_n
&\in
\mathbb{R}^{L\times 4},
\qquad
\mathbf{Y}_n
\in
\mathbb{R}^{H\times 4},
\end{aligned}
\label{eq:forecasting_dataset}
\end{equation}
denote a collection of cryptocurrency forecasting samples, where
$\mathbf{X}_n$ is an observed context window of length $L$ and
$\mathbf{Y}_n$ is the corresponding forecast horizon of length $H$.
The four channels are ordered as
\begin{equation}
\mathbf{X}_{n,t,:}
=
\left(
O_{n,t},
H_{n,t},
L_{n,t},
C_{n,t}
\right),
\label{eq:ohlc_channel_order}
\end{equation}
representing the open, high, low, and close prices, respectively. A
valid OHLC observation belongs to the feasible set
\begin{equation}
\begin{aligned}
\Omega_{\mathrm{OHLC}}
=
\bigl\{
(O,H,L,C)\in\mathbb{R}^{4}:\;
&H\geq\max(O,C),\\
&L\leq\min(O,C)
\bigr\}.
\end{aligned}
\label{eq:ohlc_feasible_set}
\end{equation}
Given a forecasting model $f_{\boldsymbol{\theta}}$, the objective is
to estimate
\begin{equation}
\widehat{\mathbf{Y}}_n
=
f_{\boldsymbol{\theta}}(\mathbf{X}_n)
\label{eq:forecasting_objective}
\end{equation}
across cryptocurrencies whose numerical scales may differ by several
orders of magnitude, while maintaining stable optimization and
reducing violations of the constraints in
Equation~\eqref{eq:ohlc_feasible_set}. All normalization statistics are
computed exclusively from the observed context $\mathbf{X}_n$ and are
reused to normalize the associated target and restore the prediction
to its original physical scale.

\subsection{Channel-wise and Dynamic-Epsilon Normalization}
\label{subsec:ci_cd_dynamic_epsilon}

We investigate two choices for the axes over which RevIN statistics
are computed. In \emph{channel-independent} (CI) normalization, each
OHLC channel is normalized using its own context statistics. For
channel $c\in\{O,H,L,C\}$, these statistics are
\begin{equation}
\begin{aligned}
\mu_{n,c}^{\mathrm{CI}}
&=
\frac{1}{L}
\sum_{t=1}^{L}
X_{n,t,c},\\
v_{n,c}^{\mathrm{CI}}
&=
\frac{1}{L}
\sum_{t=1}^{L}
\left(
X_{n,t,c}
-
\mu_{n,c}^{\mathrm{CI}}
\right)^2,
\end{aligned}
\label{eq:ci_statistics}
\end{equation}
with effective scale
\begin{equation}
s_{n,c}^{\mathrm{CI}}
=
\sqrt{
v_{n,c}^{\mathrm{CI}}
+
\epsilon_{n,c}
}.
\label{eq:ci_scale}
\end{equation}
The context and target are then transformed as
\begin{equation}
\begin{aligned}
\widetilde{X}_{n,t,c}^{\mathrm{CI}}
&=
\frac{
X_{n,t,c}-\mu_{n,c}^{\mathrm{CI}}
}{
s_{n,c}^{\mathrm{CI}}
},\\
\widetilde{Y}_{n,\tau,c}^{\mathrm{CI}}
&=
\frac{
Y_{n,\tau,c}-\mu_{n,c}^{\mathrm{CI}}
}{
s_{n,c}^{\mathrm{CI}}
}.
\end{aligned}
\label{eq:ci_transform}
\end{equation}
CI normalization standardizes open, high, low, and close separately.
Because the four channels undergo different affine transformations,
their original ordering is not mathematically guaranteed to remain
unchanged in normalized space.

In \emph{channel-dependent} (CD) normalization, one shared mean and
variance are computed jointly over the temporal and channel
dimensions:
\begin{equation}
\begin{aligned}
\mu_n^{\mathrm{CD}}
&=
\frac{1}{4L}
\sum_{t=1}^{L}
\sum_{c=1}^{4}
X_{n,t,c},\\
v_n^{\mathrm{CD}}
&=
\frac{1}{4L}
\sum_{t=1}^{L}
\sum_{c=1}^{4}
\left(
X_{n,t,c}
-
\mu_n^{\mathrm{CD}}
\right)^2.
\end{aligned}
\label{eq:cd_statistics}
\end{equation}
The corresponding effective scale is
\begin{equation}
s_n^{\mathrm{CD}}
=
\sqrt{
v_n^{\mathrm{CD}}
+
\epsilon_n
},
\label{eq:cd_scale}
\end{equation}
and the same transformation is applied to all OHLC channels:
\begin{equation}
\begin{aligned}
\widetilde{X}_{n,t,c}^{\mathrm{CD}}
&=
\frac{
X_{n,t,c}-\mu_n^{\mathrm{CD}}
}{
s_n^{\mathrm{CD}}
},\\
\widetilde{Y}_{n,\tau,c}^{\mathrm{CD}}
&=
\frac{
Y_{n,\tau,c}-\mu_n^{\mathrm{CD}}
}{
s_n^{\mathrm{CD}}
}.
\end{aligned}
\label{eq:cd_transform}
\end{equation}
Unlike CI, CD applies one common affine transformation to the complete
OHLC sample and therefore retains the relative organization of the
candlestick channels. Further mathematical proofs and analysis of order
preservation under CD and CI normalization are provided in
Appendix~\ref{sec:mathematical_analysis}.

We additionally examine fixed and dynamic formulations of the
stabilizing term used in Equations~\eqref{eq:ci_scale}
and~\eqref{eq:cd_scale}. The fixed formulation is
\begin{equation}
\epsilon^{\mathrm{fix}}
=
10^{-5},
\label{eq:fixed_epsilon}
\end{equation}
whereas the dynamic formulation adapts the stabilizer to the magnitude
of the context mean:
\begin{equation}
\epsilon^{\mathrm{dyn}}
=
10^{-5}
\left(
\mu^2+10^{-12}
\right).
\label{eq:dynamic_epsilon}
\end{equation}
For CI normalization, $\mu$ denotes the corresponding channelwise
mean; for CD normalization, it denotes the shared sample-level mean.
Dynamic epsilon is designed to prevent a fixed numerical constant from
having a disproportionately large effect on cryptocurrencies with very
small quoted values. Additional mathematical analysis of the fixed and
dynamic epsilon formulations is provided in
Appendix~\ref{sec:mathematical_analysis}.

\begin{table*}[t]
\centering

\resizebox{\textwidth}{!}{%
\begin{tabular}{c c l | c c c | c c c | c c c}
\toprule

\multirow{2}{*}{\textbf{\shortstack{Time\\Frame}}} & \multirow{2}{*}{\textbf{Horizon}} & \multirow{2}{*}{\textbf{Norm Technique}} & 
\multicolumn{3}{c}{\textbf{Time-MoE}} & 
\multicolumn{3}{c}{\textbf{Timer-XL}} & 
\multicolumn{3}{c}{\textbf{Timer}} \\ 
\cmidrule(lr){4-6} \cmidrule(lr){7-9} \cmidrule(lr){10-12}

& & & 
\textbf{MSE} & \textbf{MAE} & \textbf{PHY} & 
\textbf{MSE} & \textbf{MAE} & \textbf{PHY} & 
\textbf{MSE} & \textbf{MAE} & \textbf{PHY} \\
\midrule

 & \multirow{5}{*}{5}  
 & w/o RevIN & 5.7e+8 & 6325.3 & 1.312 & 5.2e+8 & 6571.64 & 5.536 & 6.15e+8 & 6548.55 & 15.154 \\
 & & CI RevIN  & 9890.9 & 19.03 & 0.0040 & \textcolor{red}{4387.64} & \textcolor{red}{11.66} & 0.012 & 4722.96 & 12.37 & \textcolor{blue}{0.439} \\
 & & CD RevIN  & 9868.58 & 18.93 & 0.0076 & \textcolor{blue}{4485.7} & \textcolor{blue}{11.81} & 0.0050 & 5013.64 & 12.70 & 1.71 \\
 \rowcolor{LightGray} \cellcolor{white} & \cellcolor{white} & Two-Phase CI RevIN & \textcolor{red}{6610.0} & \textcolor{red}{15.25} & \textcolor{blue}{0.001} & 4584.75 & 12.34 & \textcolor{blue}{0.0005} & \textcolor{red}{4510.90} & \textcolor{red}{12.14} & \textcolor{red}{0.278} \\
 \rowcolor{LightGray} \cellcolor{white} & \cellcolor{white} & Two-Phase CD RevIN & \textcolor{blue}{7480.2} & \textcolor{blue}{16.31} & \textcolor{red}{0.00049} & 4811.0 & 12.72 & \textcolor{red}{0.0004} & \textcolor{blue}{4632.05} & \textcolor{blue}{12.30} & 1.09 \\
 \cmidrule{2-12}

 & \multirow{5}{*}{15} 
 & w/o RevIN & 5.7e+8 & 6328.50 & 1.827 & 5.5e+8 & 6572.12 & 5.532 & 6.15e+8 & 6548.99 & 9.523 \\
 & & CI RevIN  & 2.20e+4 & 28.63 & 0.0046 & 1.05e+4 & 18.54 & 0.0159 & 1.06e+4 & 18.69 & \textcolor{blue}{0.915} \\
 & & CD RevIN  & 2.23e+4 & 28.62 & \textcolor{blue}{0.0025} & 1.07e+4 & 18.71 & 0.0061 & 1.19e+4 & 19.73 & 6.285 \\
 \rowcolor{LightGray} \cellcolor{white} & \cellcolor{white} & Two-Phase CI RevIN & \textcolor{red}{1.53e+4} & \textcolor{red}{23.55} & 0.012 & \textcolor{blue}{1.02e+4} & \textcolor{blue}{18.39} & \textcolor{blue}{0.0003} & \textcolor{red}{9949.94} & \textcolor{red}{18.09} & \textcolor{red}{0.390} \\
 \rowcolor{LightGray} \cellcolor{white} \multirow{-10}{*}{\textbf{5m}} & \cellcolor{white} & Two-Phase CD RevIN & \textcolor{blue}{1.95e+4} & \textcolor{blue}{26.83} & \textcolor{red}{0.0010} & \textcolor{red}{1.01e+4} & \textcolor{red}{18.32} & \textcolor{red}{0.0002} & \textcolor{blue}{1.04e+4} & \textcolor{blue}{18.54} & 4.36 \\
\midrule\midrule

 & \multirow{5}{*}{5}  
 & w/o RevIN & 7.73e+8 & 8438.67 & 1.271 & 7.66e+8 & 8367.45 & 0.501 & 7.71e+8 & 8408.11 & 5.717 \\
 & & CI RevIN  & 6.47e+4 & 54.60 & 0.0685 & 3.35e+4 & 37.58 & 0.063 & 3.22e+4 & 36.13 & \textcolor{blue}{1.012} \\
 & & CD RevIN  & 6.33e+4 & 54.20 & 0.0307 & 3.38e+4 & 37.97 & 0.081 & 3.24e+4 & 36.39 & 3.957 \\
 \rowcolor{LightGray} \cellcolor{white} & \cellcolor{white} & Two-Phase CI RevIN & \textcolor{red}{4.67e+4} & \textcolor{red}{45.45} & \textcolor{blue}{0.011} & \textcolor{red}{3.11e+4} & \textcolor{red}{36.04} & \textcolor{blue}{0.003} & \textcolor{blue}{3.03e+4} & \textcolor{blue}{34.93} & \textcolor{red}{0.678} \\
 \rowcolor{LightGray} \cellcolor{white} & \cellcolor{white} & Two-Phase CD RevIN & \textcolor{blue}{5.41e+4} & \textcolor{blue}{49.12} & \textcolor{red}{0.0076} & \textcolor{blue}{3.11e+4} & \textcolor{blue}{36.26} & \textcolor{red}{0.00074} & \textcolor{red}{1.4e+4} & \textcolor{red}{18.54} & 4.365 \\
 \cmidrule{2-12}

 & \multirow{5}{*}{15} 
 & w/o RevIN & 7.73e+8 & 8442.13 & 1.553 & 7.68e+8 & 8383.54 & 0.606 & 7.71e+8 & 8412.84 & 5.542 \\
 & & CI RevIN  & 1.56e+5 & 86.77 & 0.0701 & 8.29e+4 & 59.11 & 0.228 & 7.66e+4 & 57.00 & \textcolor{blue}{3.238} \\
 & & CD RevIN  & 1.55e+5 & 87.05 & \textcolor{blue}{0.0443} & 8.49e+4 & 60.007 & 0.163 & 8.51e+4 & 59.75 & 20.67 \\
 \rowcolor{LightGray} \cellcolor{white} & \cellcolor{white} & Two-Phase CI RevIN & \textcolor{red}{1.14e+5} & \textcolor{red}{72.48} & 0.12 & \textcolor{blue}{7.34e+4} & \textcolor{blue}{55.10} & \textcolor{blue}{0.026} & \textcolor{red}{6.99e+4} & \textcolor{red}{53.38} & \textcolor{red}{1.123} \\
 \rowcolor{LightGray} \cellcolor{white} \multirow{-10}{*}{\textbf{30m}} & \cellcolor{white} & Two-Phase CD RevIN & \textcolor{blue}{1.37e+5} & \textcolor{blue}{80.14} & \textcolor{red}{0.0078} & \textcolor{red}{7.24e+4} & \textcolor{red}{54.71} & \textcolor{red}{0.00052} & \textcolor{blue}{7.36e+4} & \textcolor{blue}{55.11} & 15.002 \\

\bottomrule
\end{tabular}%
}

\captionsetup{justification=centering}
\caption{Comparison of Different RevIN techniques. Lower numbers indicate lower errors and better performances. All of the experiments in this table are done using training with MSE only. Best metrics in each category are colored with \textcolor{red}{Red} and Second best metrics are colored with \textcolor{blue}{Blue}.}
\label{tab:main_table}
\end{table*}

\begin{table*}[t]
\centering

\setlength{\aboverulesep}{0pt}
\setlength{\belowrulesep}{0pt}
\renewcommand{\arraystretch}{1.3} 

\begin{tabular}{ll ccc ccc ggg}
\toprule
\multirow{2}{*}{\textbf{Model}} & \multirow{2}{*}{\textbf{Horizon}} & \multicolumn{3}{c}{\textbf{FAN}} & \multicolumn{3}{c}{\textbf{SAN}} & \multicolumn{3}{>{\columncolor{gray!15}}c}{\textbf{TP-RevIN}} \\
\cmidrule(lr){3-5} \cmidrule(lr){6-8} \cmidrule(lr){9-11}
& & MAE & MAPE & PHY & MAE & MAPE & PHY & MAE & MAPE & PHY \\
\midrule
\multirow{3}{*}{Time-MoE} 
& 5  & 284.19 & 1.63e+4 & 181.12 & 34.99 & 1.45e+4 & 0.32 & 68.70 & 1.31 & 0.068 \\
& 15 & 285.92 & 1.92e+3 & 187.11 &  55.89 & 1.52e+3 & 0.15 & 108.05 & 2.07 & 0.91 \\
& 30 & 286.15 & 2.6e+4 & 183.43 & 83.23 & 5.21e+3 &  0.50 & 131.45 & 2.52 & 2.68 \\
\midrule
\multirow{3}{*}{Timer-XL} 
& 5  & 113.85 & 1.67e+4 & 12.68 & 36.03 & 2.40e+4 & 0.84 & 50.93 & 0.95 & 0.01 \\
& 15 & 123.66 & 8.0e+4 & 11.03 &  56.75 & 4.84e+3 & 1.46 & 80.36 & 1.49 & 0.016 \\
& 30 & 125.46 & 2.6e+5 & 11.04 & 81.05 & 9.58e+3 & 1.29 & 106.21 & 1.97 & 0.0097 \\
\bottomrule
\end{tabular}
\caption{Comparison of Models with FAN, SAN, and TP-RevIN Normalization Techniques. Experiments are done under 1h time frame. For TP-RevIN, here we use dynamic epsilon.}
\label{tab:san_fan_tp}
\end{table*}

\subsection{Two-Phase RevIN and Scale-Balanced Optimization}
\label{subsec:tp_revin}

Let the normalization statistics selected in the previous subsection be denoted by
$\boldsymbol{\mu}_n\in\mathbb{R}^{4}$ and
$\mathbf{S}_n\in\mathbb{R}^{4\times4}$, where
$\mathbf{S}_n$ is a positive diagonal channel-scale matrix. Under CD
normalization, $\mathbf{S}_n=s_n\mathbf{I}_4$; under CI normalization,
its diagonal entries contain the channelwise scales. After
normalizing the observed context and target, we write
\begin{equation}
\widetilde{\mathbf{X}}_n
=
\left(
\mathbf{X}_n-\mathbf{1}_L\boldsymbol{\mu}_n^{\top}
\right)
\mathbf{S}_n^{-1},
\qquad
\mathbf{Z}_n
=
\left(
\mathbf{Y}_n-\mathbf{1}_H\boldsymbol{\mu}_n^{\top}
\right)
\mathbf{S}_n^{-1},
\label{eq:tp_normalized_input_target}
\end{equation}
where the statistics are computed exclusively from
$\mathbf{X}_n$. The forecasting backbone produces a normalized
prediction
\begin{equation}
\widehat{\mathbf{Z}}_n
=
f_{\boldsymbol{\theta}}
\left(
\widetilde{\mathbf{X}}_n
\right),
\label{eq:tp_normalized_prediction}
\end{equation}
and the corresponding physical-scale prediction is recovered as
\begin{equation}
\widehat{\mathbf{Y}}_n
=
\mathbf{1}_H\boldsymbol{\mu}_n^{\top}
+
\widehat{\mathbf{Z}}_n\mathbf{S}_n.
\label{eq:tp_denormalization}
\end{equation}

Conventional RevIN-based training commonly restores the prediction
through Equation~\eqref{eq:tp_denormalization} before evaluating MSE.
We refer to this objective as original-space RevIN training:
\begin{equation}
\mathcal{L}_{\mathrm{R}}
\left(
\boldsymbol{\theta}
\right)
=
\frac{1}{2N}
\sum_{n=1}^{N}
\left\|
\widehat{\mathbf{Y}}_n-\mathbf{Y}_n
\right\|_F^2.
\label{eq:revin_raw_loss}
\end{equation}
In contrast, \emph{Two-Phase RevIN} (TP-RevIN) separates the
normalized training phase from the physical-scale inference phase.
During training, the objective is evaluated directly between the
normalized prediction and normalized target:
\begin{equation}
\mathcal{L}_{\mathrm{TP}}
\left(
\boldsymbol{\theta}
\right)
=
\frac{1}{2N}
\sum_{n=1}^{N}
\left\|
\widehat{\mathbf{Z}}_n-\mathbf{Z}_n
\right\|_F^2.
\label{eq:tp_revin_loss}
\end{equation}
Inverse normalization is therefore required for inference and
reporting, but it does not participate in the training objective.

\paragraph{Scale-induced reweighting in original-space RevIN.}
Let $\operatorname{rvec}(\cdot)$ stack the rows of an
$H\times4$ matrix, and define
\begin{equation}
\mathbf{D}_n
=
\mathbf{I}_H\otimes\mathbf{S}_n
\in
\mathbb{R}^{4H\times4H}.
\label{eq:vectorized_scale_matrix}
\end{equation}
Define the vectorized normalized residual and its parameter Jacobian as
\begin{equation}
\mathbf{e}_n
\left(
\boldsymbol{\theta}
\right)
=
\operatorname{rvec}
\left(
\widehat{\mathbf{Z}}_n-\mathbf{Z}_n
\right),
\qquad
\mathbf{J}_n
\left(
\boldsymbol{\theta}
\right)
=
\frac{
\partial\operatorname{rvec}
\left(
\widehat{\mathbf{Z}}_n
\right)
}{
\partial\boldsymbol{\theta}
}
\in\mathbb{R}^{4H\times p}.
\label{eq:normalized_residual_jacobian}
\end{equation}

\noindent\textbf{Proposition 1.}
Assume that $\boldsymbol{\mu}_n$ and $\mathbf{D}_n$ are computed from
the observed context and are independent of
$\boldsymbol{\theta}$. Then original-space RevIN MSE satisfies
\begin{equation}
\mathcal{L}_{\mathrm{R}}
\left(
\boldsymbol{\theta}
\right)
=
\frac{1}{2N}
\sum_{n=1}^{N}
\mathbf{e}_n^{\top}
\mathbf{D}_n^{2}
\mathbf{e}_n,
\label{eq:raw_loss_weighted_normalized}
\end{equation}
whereas TP-RevIN satisfies
Equation~\eqref{eq:tp_revin_loss}. Their parameter gradients are
\begin{equation}
\nabla_{\boldsymbol{\theta}}
\mathcal{L}_{\mathrm{R}}
=
\frac{1}{N}
\sum_{n=1}^{N}
\mathbf{J}_n^{\top}
\mathbf{D}_n^{2}
\mathbf{e}_n,
\label{eq:raw_revin_gradient}
\end{equation}
and
\begin{equation}
\nabla_{\boldsymbol{\theta}}
\mathcal{L}_{\mathrm{TP}}
=
\frac{1}{N}
\sum_{n=1}^{N}
\mathbf{J}_n^{\top}
\mathbf{e}_n.
\label{eq:tp_revin_gradient}
\end{equation}
Consequently, original-space RevIN introduces an explicit
scale-dependent weighting into both the empirical objective and the
shared parameter update, while TP-RevIN removes this weighting.

\paragraph{Proof.}
From Equations~\eqref{eq:tp_normalized_input_target}
and~\eqref{eq:tp_denormalization},
\begin{equation}
\begin{aligned}
\operatorname{rvec}
\left(
\widehat{\mathbf{Y}}_n-\mathbf{Y}_n
\right)
&=
\mathbf{D}_n
\operatorname{rvec}
\left(
\widehat{\mathbf{Z}}_n-\mathbf{Z}_n
\right)\\
&=
\mathbf{D}_n\mathbf{e}_n.
\end{aligned}
\label{eq:raw_normalized_residual_relation}
\end{equation}
Therefore,
\begin{equation}
\left\|
\widehat{\mathbf{Y}}_n-\mathbf{Y}_n
\right\|_F^2
=
\mathbf{e}_n^{\top}
\mathbf{D}_n^{\top}
\mathbf{D}_n
\mathbf{e}_n.
\label{eq:raw_residual_quadratic_form}
\end{equation}
Because $\mathbf{D}_n$ is diagonal and positive,
$\mathbf{D}_n^{\top}\mathbf{D}_n=\mathbf{D}_n^2$, which proves
Equation~\eqref{eq:raw_loss_weighted_normalized}.

Differentiating the per-sample term in
Equation~\eqref{eq:raw_loss_weighted_normalized} and treating
$\mathbf{D}_n$ as constant with respect to
$\boldsymbol{\theta}$ gives
\begin{equation}
\nabla_{\boldsymbol{\theta}}
\left(
\frac{1}{2}
\mathbf{e}_n^{\top}
\mathbf{D}_n^2
\mathbf{e}_n
\right)
=
\mathbf{J}_n^{\top}
\mathbf{D}_n^2
\mathbf{e}_n.
\label{eq:raw_sample_gradient}
\end{equation}
Summing over all samples yields
Equation~\eqref{eq:raw_revin_gradient}. Similarly,
differentiating the TP-RevIN term
$\frac{1}{2}\mathbf{e}_n^{\top}\mathbf{e}_n$ gives
$\mathbf{J}_n^{\top}\mathbf{e}_n$, establishing
Equation~\eqref{eq:tp_revin_gradient}.
\hfill$\square$

For CD normalization, where
$\mathbf{D}_n=s_n\mathbf{I}_{4H}$, Proposition~1 reduces to
\begin{equation}
\mathcal{L}_{\mathrm{R}}
=
\frac{1}{2N}
\sum_{n=1}^{N}
s_n^2
\left\|
\mathbf{e}_n
\right\|_2^2,
\label{eq:raw_scalar_scale_loss}
\end{equation}
and
\begin{equation}
\nabla_{\boldsymbol{\theta}}
\mathcal{L}_{\mathrm{R}}
=
\frac{1}{N}
\sum_{n=1}^{N}
s_n^2
\mathbf{J}_n^{\top}
\mathbf{e}_n.
\label{eq:raw_scalar_scale_gradient}
\end{equation}
Thus two samples with comparable normalized residuals and comparable
model sensitivity contribute to the original-space update in
proportion to the squares of their context scales. This is the precise
sense in which scale dominance arises in RevIN-based MSE training.
Large numerical scale alone does not determine the complete gradient,
since the contribution also depends on
$\mathbf{J}_n^{\top}\mathbf{e}_n$; however, the factor $s_n^2$ is an
unavoidable multiplicative component of the objective whenever the
loss is evaluated after inverse normalization.

The same result can be expressed at the asset level. Let
$\pi_a$ denote the sampling probability of asset $a$, and define its
normalized forecasting risk as
\begin{equation}
R_a
\left(
\boldsymbol{\theta}
\right)
=
\frac{1}{2}
\mathbb{E}
\left[
\left\|
\mathbf{e}
\left(
\boldsymbol{\theta}
\right)
\right\|_2^2
\,\middle|\,
a
\right].
\label{eq:asset_normalized_risk}
\end{equation}
If asset $a$ has approximately constant context scale $s_a$, then
original-space RevIN minimizes
\begin{equation}
R_{\mathrm{R}}
\left(
\boldsymbol{\theta}
\right)
=
\sum_{a}
\pi_a s_a^2
R_a
\left(
\boldsymbol{\theta}
\right),
\label{eq:raw_asset_risk}
\end{equation}
whereas TP-RevIN minimizes
\begin{equation}
R_{\mathrm{TP}}
\left(
\boldsymbol{\theta}
\right)
=
\sum_{a}
\pi_a
R_a
\left(
\boldsymbol{\theta}
\right).
\label{eq:tp_asset_risk}
\end{equation}
Up to a positive global constant, Equation~\eqref{eq:raw_asset_risk}
is equivalent to replacing the declared asset distribution $\pi_a$
with
\begin{equation}
q_a
=
\frac{
\pi_a s_a^2
}{
\sum_b \pi_b s_b^2
}.
\label{eq:effective_asset_distribution}
\end{equation}
Original-space RevIN therefore changes the effective optimization
priority toward assets with larger context scales. TP-RevIN preserves
the declared sampling weights and removes this implicit
scale-dependent redistribution.

This distinction is particularly important for cryptocurrency panels,
where assets can span many orders of magnitude. Under comparable
normalized forecasting errors and model sensitivities, a high-scale
asset can dominate the aggregate parameter update even when low-scale
assets are sampled equally often. TP-RevIN does not make all assets
equally difficult or guarantee equal gradient norms; differences in
forecastability, residual structure, sampling frequency, and model
sensitivity remain. It specifically removes the multiplicative
weighting introduced solely by the RevIN scale and thereby yields a
training objective that is neutral to positive affine changes in the
numerical units of each asset. Further proofs based on Jacobian and
Hessian analyses are provided in
Appendix~\ref{sec:mathematical_analysis}.

\begin{table*}[t]
\centering

\renewcommand{\arraystretch}{1.1}

\begin{tabular}{ll cc gg}
\toprule
\multirow{2}{*}{\textbf{Model}} & \multirow{2}{*}{\textbf{Coin}} & \multicolumn{2}{c}{\textbf{RevIN}} & \multicolumn{2}{c}{\textbf{TP-RevIN}} \\
\cmidrule(lr){3-4} \cmidrule(lr){5-6}
& & \textbf{Fixed Epsilon} & \textbf{Dynamic Epsilon} & \textbf{Fixed Epsilon} & \textbf{Dynamic Epsilon} \\
\midrule
\multirow{5}{*}{Time-MoE} 
& BTC  & 1.15 / 2.38 &  1.16 / 2.38 &  1.12 / 2.23 & 1.16 / 2.27 \\
& ETH  & 1.94 / 4.20 & 1.96 / 4.16 & 1.85 / 3.58 & 1.94 / 3.59 \\
& SHIB & 17362.6 / 12894.9 & 1.89 / 4.12 & 67.24 / 73.13 & 1.87 / 3.51 \\
& DOGE & 2.19 / 4.71 & 2.21 / 4.68 & 2.07 / 4.26 &  2.18 / 4.17 \\
& ADA  & 2.47 / 5.61 & 2.48 / 5.61 & 2.37 / 4.93 & 2.47 / 4.95 \\
\midrule
\multirow{5}{*}{Timer-XL} 
& BTC  & 0.93 / 1.85 & 0.94 / 1.83 & 0.93 / 1.72 & 0.91 / 1.73 \\
& ETH  & 1.61 / 3.19 & 1.65 / 3.26 & 1.67 / 2.81 & 1.61 / 2.88 \\
& SHIB & 181.5 / 3003.3 & 1.48 / 3.20 & 71.2 / 146.3 & 1.40 / 2.74 \\
& DOGE & 1.72 / 3.74 & 1.71 / 3.69 & 1.73 / 3.31 & 1.67 / 3.25 \\
& ADA  & 1.85 / 4.06 & 1.86 / 4.13 & 1.82 / 3.69 & 1.81 / 3.81 \\
\midrule
\multirow{5}{*}{Timer} 
& BTC  & 0.88 / 1.77 & 0.88 / 1.80 & 0.85 / 1.68 & 0.85 / 1.65 \\
& ETH  & 1.44 / 3.02 & 1.45 / 3.00 & 1.40 /  2.70 & 1.39 / 2.65 \\
& SHIB & 471.4 / 5346.9 & 1.45 / 2.97 & 179.0 / 355.4 & 1.38 / 2.72 \\
& DOGE & 1.63 / 3.48 & 1.64 / 3.42 & 1.58 / 3.20 & 1.56 / 3.12 \\
& ADA  & 1.89 / 4.13 & 1.90 / 3.94 & 1.87 / 3.74 & 1.86 / 3.80 \\
\bottomrule
\end{tabular}
\caption{Comparison of Fixed vs. Dynamic Epsilon Normalization (Values report Horizon 5 / 30 for MAPE).}
\label{tab:epsilon_comparison}
\end{table*}

\begin{table*}[ht]
    \centering
    
    \resizebox{\textwidth}{!}{%
        \begin{tabular}{lcccccccccc} 
            \toprule
            \multirow{2}{*}{\textbf{Model}} & \multirow{2.5}{*}{\begin{tabular}[c]{@{}c@{}}\textbf{Time}\\\textbf{Frame}\end{tabular}} & \multirow{2.5}{*}{\textbf{Horizon}} & \multicolumn{2}{c}{\textbf{RevIN (CI)}} & \multicolumn{2}{c}{\textbf{TP-RevIN (CI)}} & \multicolumn{2}{c}{\textbf{TP-RevIN (CD)}} & \multicolumn{2}{c}{\textbf{Unconstrained Space}} \\
            \cmidrule(lr){4-5} \cmidrule(lr){6-7} \cmidrule(lr){8-9} \cmidrule(lr){10-11}
             & & & \textbf{MAE} & \textbf{PHY} & \textbf{MAE} & \textbf{PHY} & \textbf{MAE} & \textbf{PHY} & \textbf{MAE} & \textbf{PHY} \\
            \midrule
            
            \multirow{6}{*}{Time-MoE} 
             & \multirow{3}{*}{30m} 
             & 5  & 47.08 & 0.047 & 45.90 & 1.0e-6 & 49.82 & 1.6e-3 & 459.66 & 0.0 \\
             & & 15 & 78.63 & 2.20 & 72.64 & 5.0e-7 & 80.99 & 1.5e-3 & 452.61 & 0.0 \\
             & & 30 & 100.07 & 6.08 & 91.81 & 3.6e-5 & 101.23 & 7.9e-4 & 517.42 & 0.0 \\
             \cmidrule(lr){2-11} 
             & \multirow{3}{*}{1h}  
             & 5  & 68.28 & 0.161 & 66.91 & 1.0e-6 & 69.25 & 2.7e-3 & 723.18 & 0.0 \\
             & & 15 & 112.06 & 1.14 & 109.06 & 8.0e-6 & 117.69 & 7.8e-3 & 764.56 & 0.0 \\
             & & 30 & 117.92 & 6.14 & 130.28 & 4.3e-5 & 141.20 & 8.9e-3 & 768.84 & 0.0 \\
            \midrule
            
            \multirow{6}{*}{Timer-XL} 
             & \multirow{3}{*}{30m} 
             & 5  & 34.26 & 5.4e-2 & 34.77 & 2.0e-6 & 37.09 & 2.9e-5 & 3307.53 & 0.0 \\
             & & 15 & 55.07 & 2.4e-1 & 54.04 & 1.9e-3 & 55.71 & 2.0e-6 & 1285.78 & 0.0 \\
             & & 30 & 73.50 & 1.6e-1 & 71.19 & 6.9e-4 & 106.98 & 2.0e-4 & 1520.53 & 0.0 \\
             \cmidrule(lr){2-11} 
             & \multirow{3}{*}{1h}  
             & 5  & 52.15 & 8.7e-2 & 52.10 & 1.0e-7 & 54.91 & 7.4e-5 & 1492.80 & 0.0 \\
             & & 15 & 85.10 & 4.4e-1 & 82.53 & 9.6e-5 & 84.85 & 4.5e-4 & 801.78 & 0.0 \\
             & & 30 & 110.69 & 3.1e-1 & 105.5 & 5.0e-7 & 106.98 & 2.0e-4 & 1247.36 & 0.0 \\
            \bottomrule
        \end{tabular}%
    }

\caption{Analysis of training models using auxiliary physics loss on different methods}
\label{tab:physics}
\end{table*}

\subsection{Normalized-Space OHLC Constraint Loss}
\label{subsec:physics_loss}

Under TP-RevIN, both the forecasting objective and the auxiliary OHLC
constraint loss are evaluated in normalized coordinates. Let
$\widehat{\mathbf{Z}}_n$ denote the normalized prediction, with
components
$\widehat{O}_{n,\tau}$, $\widehat{H}_{n,\tau}$,
$\widehat{L}_{n,\tau}$, and $\widehat{C}_{n,\tau}$ at forecast step
$\tau$. Defining $[x]_{+}=\max(0,x)$, the normalized-space constraint
loss is
\begin{equation}
\begin{aligned}
\mathcal{L}_{\mathrm{phy}}
=
\frac{1}{NH}
\sum_{n=1}^{N}
\sum_{\tau=1}^{H}
\Bigl(
&
[\widehat{O}_{n,\tau}-\widehat{H}_{n,\tau}]_{+}
+
[\widehat{C}_{n,\tau}-\widehat{H}_{n,\tau}]_{+}
\\
&+
[\widehat{L}_{n,\tau}-\widehat{O}_{n,\tau}]_{+}
+
[\widehat{L}_{n,\tau}-\widehat{C}_{n,\tau}]_{+}
\\
&+
[\widehat{L}_{n,\tau}-\widehat{H}_{n,\tau}]_{+}
\Bigr).
\end{aligned}
\label{eq:normalized_physics_loss}
\end{equation}
The complete TP-RevIN training objective is therefore
\begin{equation}
\mathcal{L}_{\mathrm{total}}
=
\lambda_{\mathrm{MSE}}
\mathcal{L}_{\mathrm{TP}}
+
\lambda_{\mathrm{phy}}
\mathcal{L}_{\mathrm{phy}},
\label{eq:total_training_objective}
\end{equation}
where both terms are computed before inverse normalization. This
design prevents the auxiliary loss from reintroducing the
scale-dependent weighting removed by TP-RevIN. Additional mathematical
results concerning normalized-space OHLC constraints are provided in
Appendix~\ref{sec:mathematical_analysis}.

\subsection{CryptoL Framework}
\label{subsec:cryptol_framework}

CryptoL combines dynamic-epsilon normalization, TP-RevIN, and the
normalized-space OHLC constraint loss within a unified multivariate
cryptocurrency forecasting framework. The input context and target are
first transformed using either CI or CD statistics together with the
dynamic epsilon in Equation~\eqref{eq:dynamic_epsilon}. The forecasting
backbone predicts the future sequence in normalized coordinates, where
both the MSE objective and the auxiliary constraint loss are evaluated
according to Equation~\eqref{eq:total_training_objective}. Inverse
normalization is applied only after optimization to recover predictions
in their original price units.

The selection between CI and CD remains configurable because the two
schemes provide different practical trade-offs. CD preserves OHLC
ordering under normalization, whereas CI can
provide stronger channelwise forecasting accuracy. Accordingly, both CI and CD variants of
CryptoL achieve low physical-consistency errors when combined with the
constraint loss, allowing the normalization axis to be selected
according to the dataset, backbone, and target evaluation criterion.

\section{Experiments}
\label{sec:experiments}

\subsection{Dataset}
\label{subsec:dataset}

We construct a large-scale cryptocurrency forecasting dataset containing approximately \(15.5\) million historical OHLC observations from \(16\) assets. The data were collected through the Binance API at multiple temporal resolutions, enabling evaluation across different forecasting time frames and horizons. The selected assets exhibit extreme scale heterogeneity, with quoted values ranging from approximately \(10^{-7}\) for low-valued assets such as PEPE to above \(10^{5}\) for high-valued assets such as BTC. This broad dynamic range provides a suitable setting for evaluating normalization behavior, cross-asset learning, and the scale-balancing properties of TP-RevIN. Complete dataset details, including temporal coverage, row counts, and asset-specific volatility distributions, are provided in Appendix~\ref{sec:dataset}.

\subsection{Training Configuration}
\label{subsec:training_configuration}

We evaluate CryptoL using three decoder-only time-series forecasting backbones: Timer \cite{timer} , Timer-XL \cite{timerxl}, and Time-MoE \cite{timemoe}. All models are optimized using AdamW with a linearly scheduled learning rate initialized at \(10^{-5}\). Each model is trained for one epoch using \(90\%\) of the available data, while the remaining \(10\%\) is reserved for evaluation. All reported forecasting metrics are computed after restoring predictions to the original physical space, ensuring a consistent and fair comparison among normalization and training configurations.

Each experiment is conducted separately for a specific time frame and forecast horizon. Within each configuration, however, the model is trained jointly on all \(16\) assets rather than independently for each cryptocurrency. This protocol evaluates whether a shared forecasting backbone can learn transferable temporal and cross-asset patterns despite substantial differences in asset scale, volatility, and market behavior.

\subsection{Hardware and Software Environment}
\label{subsec:hardware_environment}

All experiments are conducted on eight TPU v5e accelerators, each providing \(16\,\mathrm{GB}\) of accelerator memory, for a total distributed memory capacity of \(128\,\mathrm{GB}\). The implementation is developed using JAX  \cite{jax} and Flax \cite{flax}.\footnote{The experiments use JAX version 0.9.2 and Flax version 0.12.6.} Both model parameters and intermediate activations are represented in FP32 throughout training.

To distribute the training workload, we employ fully sharded data-parallel training across all eight TPUs. The model parameters, optimizer states, and input data are sharded over the complete accelerator mesh, reducing the memory allocated to each device and enabling efficient training of the selected forecasting backbones.

\section{Results and Analysis}

We begin by evaluating the predictive performance of different normalization strategies, as summarized in Table~\ref{tab:main_table} and Table~\ref{tab:san_fan_tp}. Without any RevIN \cite{revin}, the models struggle to converge effectively, resulting in extremely high MSE values due to the immense scale discrepancies among different cryptocurrency assets. Introducing standard CI and CD RevIN significantly mitigates this problem. However, TP-RevIN \cite{gtt, revinanalysis} formulations consistently yield further reductions in both MSE and MAE across various forecasting horizons and model backbones. When comparing TP-RevIN to alternative temporal normalization schemes such as FAN \cite{fan} and SAN \cite{san} in Table~\ref{tab:san_fan_tp}, we observe that both FAN and SAN yield substantially higher MAE and Mean Absolute Percentage Error (MAPE) values. This indicates that while those methods are designed to handle non-stationarity, they struggle to balance the extreme cross-asset scale differences present in joint cryptocurrency panels. In contrast, TP-RevIN preserves representation stability across diverse scales and maintains lower physical violation (PHY) rates.

To evaluate the impact of numerical stabilization across assets of varying price scales, we analyze the fixed versus dynamic epsilon formulations in Table~\ref{tab:epsilon_comparison}. For high-value assets such as Bitcoin (BTC) and Ethereum (ETH), the choice between a fixed and dynamic stabilizer ($\epsilon$) has negligible impact on the forecasting performance. However, for low-value assets like Shiba Inu (SHIB), which has an average price scale on the order of $10^{-5}$, the fixed epsilon stabilizer ($\epsilon^{\text{fix}} = 10^{-5}$) introduces severe numerical distortions during normalization. This distortion results in highly elevated MAPE values, such as $17362.6$ for Time-MoE under standard RevIN. Utilizing the scale-adaptive dynamic epsilon ($\epsilon^{\text{dyn}}$) resolves this numerical instability, reducing the SHIB MAPE to $1.87$ / $3.51$ under TP-RevIN and stabilizing the optimization process without compromising the predictive accuracy of the high-value assets.

Finally, we analyze the effectiveness of the auxiliary physics-informed constraint loss, as presented in Table~\ref{tab:physics}. The "Unconstrained Space" method \cite{uncons} mathematically guarantees zero physical violations ($\text{PHY} = 0.0$) through its structural design. However, this rigid constraint comes at a severe cost to overall predictive accuracy, leading to MAE values that are orders of magnitude higher than those of alternative approaches (for example, an MAE of $3307.53$ for Timer-XL on a 30m timeframe at horizon 5, compared to $34.77$ for TP-RevIN (CI)). Conversely, the proposed TP-RevIN models trained with the auxiliary physics loss find a more balanced trade-off. They reduce physical candlestick violations to near-zero levels while keeping the MAE highly competitive.

\section{Conclusion}

In this work, we presented CryptoL, a framework for multivariate cryptocurrency forecasting that addresses cross-asset scale heterogeneity, numerical instability, and physical candlestick constraint violations. By combining Two-Phase RevIN, dynamic epsilon stabilization, and a normalized-space auxiliary physics loss, the framework balances optimization across highly diverse price scales while encouraging structural validity. Empirical evaluations indicate that this integrated approach offers a practical compromise between forecasting accuracy and physical consistency, providing a stable foundation for training shared models on heterogeneous financial time series.

\bibliography{aaai2027}

\onecolumn
\appendix
\setcounter{secnumdepth}{1}

\section{Discussion}
\label{sec:discussion}

Cryptocurrency OHLC forecasting combines several difficulties that are
usually studied separately in general-purpose time-series forecasting.
First, cryptocurrency assets exhibit extreme numerical heterogeneity:
the quoted value of one asset may be below \(10^{-7}\), while another
may exceed \(10^{5}\). A shared forecasting model must therefore learn
from series whose numerical scales differ by more than twelve orders of
magnitude. Second, OHLC observations are structurally constrained. A
valid candlestick must satisfy
\begin{equation}
H \geq \max(O,C),
\qquad
L \leq \min(O,C),
\label{eq:discussion_ohlc_constraints}
\end{equation}
and a model may produce an invalid financial object even when its
aggregate pointwise forecasting error is small.

Reversible instance normalization reduces temporal distribution shift
by transforming each context into normalized coordinates before it is
processed by the forecasting backbone. However, normalization is not a
single unambiguous operation in multivariate OHLC forecasting. When
each channel is normalized independently, the open, high, low, and
close channels undergo different affine transformations. Their
original cross-channel ordering is therefore not guaranteed to remain
valid in normalized coordinates. In contrast, when all four channels
share one positive affine transformation, their pairwise order is
preserved exactly. This distinction motivates the configurable
channel-independent and channel-dependent components of CryptoL.

\begin{figure*}[t]
    \centering
    \includegraphics[width=\textwidth]{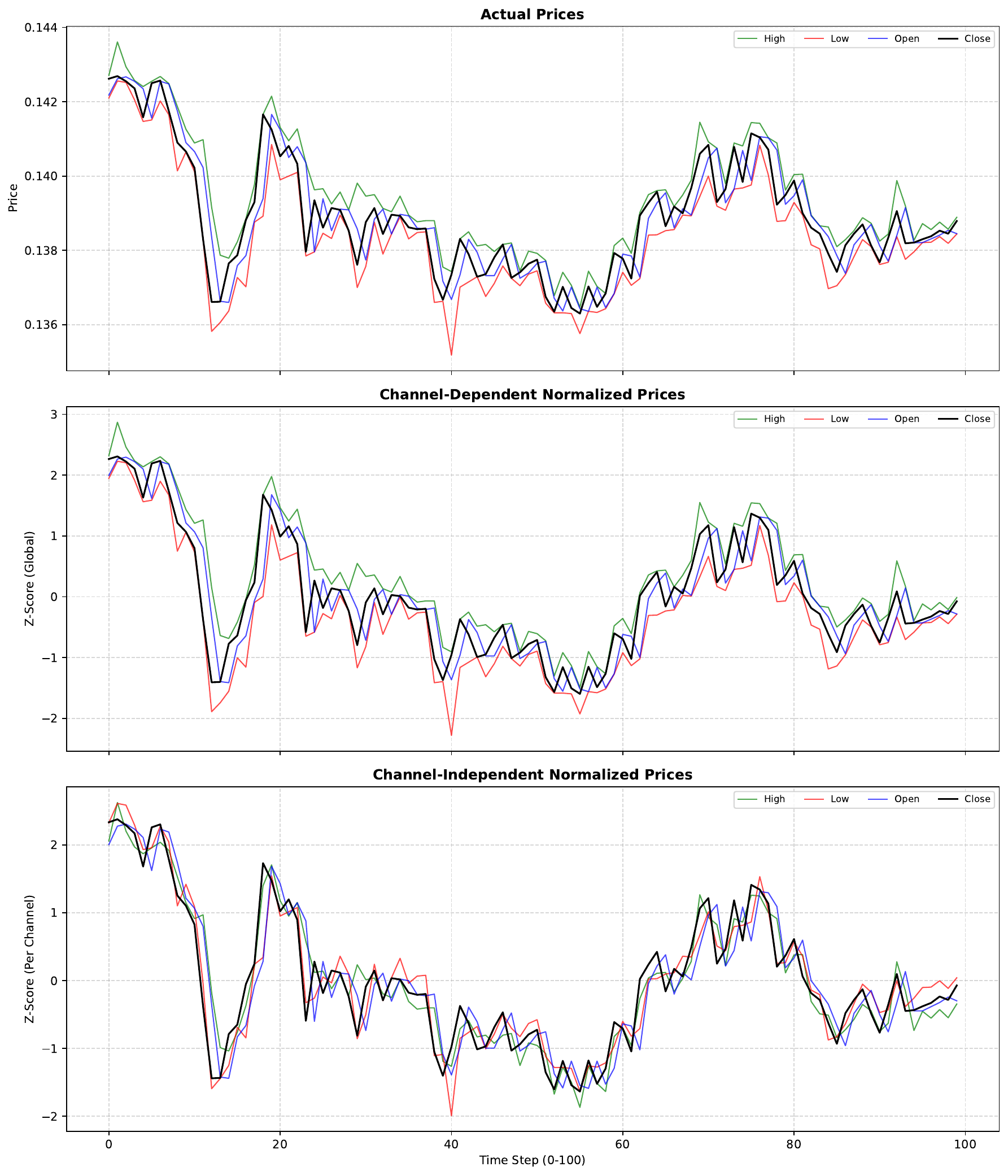}
    \caption{Difference between channel-dependent and channel-independent normalization on OHLC data and how they perform on order preservation.}
    \label{fig:ohlc}
\end{figure*}

A second issue concerns the coordinate system in which optimization is
performed. Under conventional RevIN-style training, the model predicts
in normalized coordinates, but the prediction may be inverse
normalized before mean-squared error is evaluated. This operation
reintroduces the context scale into the empirical objective. For a
sample with scalar context scale \(s_n\), the raw-space residual is
\(s_n\) times the normalized residual, and its squared loss is therefore
weighted by \(s_n^2\). In a shared multi-asset model, this weighting can
alter the aggregate gradient direction and the local curvature of the
optimization problem. TP-RevIN avoids this effect by evaluating the
forecasting objective directly in normalized target coordinates.

CryptoL additionally evaluates its OHLC consistency penalty in
normalized coordinates. Consequently, both the forecasting objective
and the auxiliary structural objective are prevented from inheriting
raw numerical scale. Under CD normalization, normalized-space OHLC
constraints are exactly equivalent to the original-space constraints
because all channels share the same positive affine transformation.
Under CI normalization, this exact equivalence does not hold; the
constraint term should instead be interpreted as a representation-space
regularizer, and financial validity must still be evaluated after
inverse normalization. Empirically, the auxiliary loss substantially
reduces original-space violations under both normalization schemes,
allowing the normalization axis to remain a configurable design choice.

The final component is the stabilizing term used in the normalization
denominator. A fixed epsilon has an absolute numerical scale and may
dominate the empirical variance of extremely low-valued assets while
being negligible for high-valued assets. CryptoL therefore studies a
dynamic epsilon proportional to the squared context mean. This choice
is approximately equivariant under positive changes of numerical units
and makes the regularization strength depend primarily on relative
rather than absolute scale. The following sections formalize these
claims and delimit their assumptions.

\section{Mathematical Analysis}
\label{sec:mathematical_analysis}

Let
\begin{equation}
\mathcal{D}
=
\left\{
\left(
\mathbf{X}_n,
\mathbf{Y}_n
\right)
\right\}_{n=1}^{N},
\qquad
\mathbf{X}_n\in\R^{L\times 4},
\qquad
\mathbf{Y}_n\in\R^{H\times 4},
\label{eq:supp_dataset}
\end{equation}
denote a set of OHLC forecasting samples. The four channels are ordered
as
\begin{equation}
\mathbf{X}_{n,t,:}
=
\left(
O_{n,t},
H_{n,t},
L_{n,t},
C_{n,t}
\right).
\label{eq:supp_channel_order}
\end{equation}
All normalization statistics are assumed to be measurable functions of
the observed context \(\mathbf{X}_n\) only. They are treated as
constants with respect to the forecasting parameters
\(\bm{\theta}\).

For the optimization analysis, the forecast horizon and channel
dimensions are vectorized into
\begin{equation}
d = 4H,
\qquad
\mathbf{z}_n,\widehat{\mathbf{z}}_n\in\R^{d}.
\label{eq:vectorized_dimension}
\end{equation}

\subsection{Order Preservation Under CD and CI Normalization}
\label{subsec:order_preservation}

\subsubsection{Channel-dependent normalization}

Under CD normalization, one shared location and one shared positive
scale are computed from the complete context:
\begin{equation}
\mu_n^{\CD}
=
\frac{1}{4L}
\sum_{t=1}^{L}
\sum_{c=1}^{4}
X_{n,t,c},
\label{eq:supp_cd_mean}
\end{equation}
\begin{equation}
v_n^{\CD}
=
\frac{1}{4L}
\sum_{t=1}^{L}
\sum_{c=1}^{4}
\left(
X_{n,t,c}
-
\mu_n^{\CD}
\right)^2,
\label{eq:supp_cd_variance}
\end{equation}
and
\begin{equation}
s_n^{\CD}
=
\sqrt{
v_n^{\CD}
+
\epsilon_n
}
>0.
\label{eq:supp_cd_scale}
\end{equation}
The shared affine transformation is
\begin{equation}
T_n^{\CD}(x)
=
\frac{x-\mu_n^{\CD}}{s_n^{\CD}}.
\label{eq:supp_cd_transform}
\end{equation}

\begin{theorem}[Order preservation under CD normalization]
\label{thm:cd_order_preservation}
Let \(x_1,x_2\in\R\), and let \(T_n^{\CD}\) be defined by
\cref{eq:supp_cd_transform} with \(s_n^{\CD}>0\). Then
\begin{equation}
x_1\geq x_2
\quad\Longleftrightarrow\quad
T_n^{\CD}(x_1)
\geq
T_n^{\CD}(x_2).
\label{eq:cd_pairwise_equivalence}
\end{equation}
Consequently, every valid OHLC observation remains valid after CD
normalization.
\end{theorem}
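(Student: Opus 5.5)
The plan is to use only that \(T_n^{\CD}\) is a strictly increasing affine map. First, observe that \(s_n^{\CD}>0\) is guaranteed by \cref{eq:supp_cd_scale}: \(v_n^{\CD}\geq 0\) is an average of squares and \(\epsilon_n>0\) under either the fixed choice \cref{eq:fixed_epsilon} or the dynamic choice \cref{eq:dynamic_epsilon}, since \(\mu^2+10^{-12}>0\). Then compute, for any \(x_1,x_2\in\R\),
\begin{equation*}
T_n^{\CD}(x_1)-T_n^{\CD}(x_2)=\frac{x_1-x_2}{s_n^{\CD}}.
\end{equation*}
The location \(\mu_n^{\CD}\) cancels exactly because it is the same for both arguments. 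Since \(1/s_n^{\CD}>0\), the difference on the left has the same sign as \(x_1-x_2\). This gives both directions of \cref{eq:cd_pairwise_equivalence} at once. The backward direction can also be argued through the inverse map \(z\mapsto \mu_n^{\CD}+s_n^{\CD}z\), which is likewise increasing.

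For the consequence, I would take a valid observation \((O,H,L,C)\in\Omega_{\mathrm{OHLC}}\) and rewrite the max/min constraints as four pairwise inequalities: \(H\geq O\), \(H\geq C\), \(L\leq O\), \(L\leq C\). The key point is that all four channels are mapped by the single function \(T_n^{\CD}\); in CI normalization each channel has its own map, which is where that case fails. Applying \cref{eq:cd_pairwise_equivalence} to each pair transfers every inequality to normalized coordinates. Recombining them gives \(T(H)\geq\max(T(O),T(C))\) and \(T(L)\leq\min(T(O),T(C))\), so the normalized tuple lies in \(\Omega_{\mathrm{OHLC}}\). The same argument applies verbatim to each context row and to each target row, since both are normalized with the context statistics \(\mu_n^{\CD}\) and \(s_n^{\CD}\).

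There is no real obstacle. The only point needing care is confirming strict positivity of \(s_n^{\CD}\) for the stabilizers actually used, which the first step already checks. Because the equivalence holds in both directions, I would also remark that the converse holds: a normalized candle is feasible if and only if its denormalized version is feasible. This is the form needed later when the normalized-space constraint loss is related to original-space violations.
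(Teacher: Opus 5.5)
Your proposal is correct and follows essentially the same route as the paper. Both arguments compute $T_n^{\CD}(x_1)-T_n^{\CD}(x_2)=(x_1-x_2)/s_n^{\CD}$, use $s_n^{\CD}>0$ to conclude that signs are preserved, and then apply this to the four pairwise inequalities $H\geq O$, $H\geq C$, $L\leq O$, $L\leq C$. Your additional check that $s_n^{\CD}>0$ for both stabilizer choices is a harmless extra, since the theorem already takes positivity as a hypothesis.
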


\begin{proof}
Subtracting the two transformed values gives
\begin{equation}
T_n^{\CD}(x_1)
-
T_n^{\CD}(x_2)
=
\frac{x_1-x_2}{s_n^{\CD}}.
\label{eq:cd_difference}
\end{equation}
Because \(s_n^{\CD}>0\),
\begin{equation}
\sign
\left(
T_n^{\CD}(x_1)
-
T_n^{\CD}(x_2)
\right)
=
\sign(x_1-x_2).
\label{eq:cd_sign_preservation}
\end{equation}
Therefore \(x_1-x_2\geq0\) if and only if the transformed difference
is nonnegative, proving
\cref{eq:cd_pairwise_equivalence}.

Applying the result to the OHLC inequalities gives
\begin{equation}
H\geq O
\Longrightarrow
\widetilde H^{\CD}
\geq
\widetilde O^{\CD},
\label{eq:cd_high_open}
\end{equation}
\begin{equation}
H\geq C
\Longrightarrow
\widetilde H^{\CD}
\geq
\widetilde C^{\CD},
\label{eq:cd_high_close}
\end{equation}
\begin{equation}
L\leq O
\Longrightarrow
\widetilde L^{\CD}
\leq
\widetilde O^{\CD},
\label{eq:cd_low_open}
\end{equation}
and
\begin{equation}
L\leq C
\Longrightarrow
\widetilde L^{\CD}
\leq
\widetilde C^{\CD}.
\label{eq:cd_low_close}
\end{equation}
Hence membership in the OHLC feasible set is preserved. The figure \ref{fig:ohlc} shows how CD and CI perform different on order preservation for OHLC data.
\end{proof}

\begin{corollary}[Equivalence of CD normalized and raw constraints]
\label{cor:cd_hinge_equivalence}
For every \(s_n^{\CD}>0\),
\begin{equation}
\relu{
\widetilde O^{\CD}
-
\widetilde H^{\CD}
}
=
\frac{1}{s_n^{\CD}}
\relu{
O-H
}.
\label{eq:cd_hinge_open_high}
\end{equation}
Analogous identities hold for every pairwise OHLC constraint.
Therefore, the zero set of the normalized-space CD physics loss is
identical to the zero set of the corresponding raw-space OHLC
constraint loss.
\end{corollary}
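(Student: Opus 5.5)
The plan is to reduce everything to the difference identity \cref{eq:cd_difference} from the proof of \cref{thm:cd_order_preservation}, and then use positive homogeneity of the hinge. Since $\widetilde O^{\CD}=T_n^{\CD}(O)$ and $\widetilde H^{\CD}=T_n^{\CD}(H)$ share the same location $\mu_n^{\CD}$ and scale $s_n^{\CD}$, the location cancels:
\begin{equation*}
\widetilde O^{\CD}-\widetilde H^{\CD}=\frac{O-H}{s_n^{\CD}}.
\end{equation*}

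Next I would record the elementary fact that $[\alpha x]_{+}=\alpha[x]_{+}$ for every $\alpha>0$. This follows by splitting into the cases $x\ge 0$ and $x<0$. Because $\alpha>0$, the sign of $\alpha x$ equals the sign of $x$, so both sides equal $\alpha x$ in the first case and $0$ in the second. Taking $\alpha=1/s_n^{\CD}$, which is positive by \cref{eq:supp_cd_scale}, gives \cref{eq:cd_hinge_open_high} directly.

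The same two lines, with $(O,H)$ replaced by $(C,H)$, $(L,O)$, $(L,C)$ or $(L,H)$, give the analogous identities for every hinge term in \cref{eq:normalized_physics_loss}. The only requirement is that both channels pass through the same map $T_n^{\CD}$, which holds by construction in \cref{eq:cd_transform}.

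For the zero-set claim, I would apply these identities termwise at each sample $n$ and step $\tau$. A single factor $1/s_n^{\CD}$ multiplies all five hinge terms of sample $n$, so the normalized CD loss equals a weighted version of the raw hinge loss, with strictly positive weights $1/s_n^{\CD}$. Both losses are sums of nonnegative terms, so each vanishes if and only if every hinge vanishes, i.e.\ $[x-y]_{+}=0$ for every constrained pair. By the identities above, each such condition holds in normalized coordinates if and only if it holds in raw coordinates, so the two zero sets coincide.

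I do not expect a genuine obstacle here. The one point that needs care is that the hinge is positively homogeneous but not invariant under shifts, so the argument depends on the shared location cancelling before the scale is pulled out. This is exactly where CI normalization would fail, because its per-channel means and scales do not cancel.
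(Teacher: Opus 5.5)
Your proposal is correct and follows the paper's proof: cancel the shared location to get $\widetilde O^{\CD}-\widetilde H^{\CD}=(O-H)/s_n^{\CD}$, then apply positive homogeneity $\relu{ax}=a\relu{x}$ with $a=1/s_n^{\CD}>0$. Your termwise argument for the zero-set claim (sums of nonnegative hinges with strictly positive per-sample weights $1/s_n^{\CD}$) spells out a step that the paper leaves implicit.
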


\begin{proof}
Using the shared affine transformation,
\begin{equation}
\widetilde O^{\CD}
-
\widetilde H^{\CD}
=
\frac{O-H}{s_n^{\CD}}.
\label{eq:cd_normalized_violation}
\end{equation}
For \(a>0\), the positive-part function satisfies
\begin{equation}
\relu{ax}
=
a\relu{x}.
\label{eq:positive_homogeneity_relu}
\end{equation}
Setting \(a=1/s_n^{\CD}\) proves
\cref{eq:cd_hinge_open_high}.
\end{proof}

\subsubsection{Channel-independent normalization}

Under CI normalization, each channel \(c\in\{O,H,L,C\}\) has its own
statistics:
\begin{equation}
\mu_{n,c}^{\CI}
=
\frac{1}{L}
\sum_{t=1}^{L}
X_{n,t,c},
\label{eq:supp_ci_mean}
\end{equation}
\begin{equation}
v_{n,c}^{\CI}
=
\frac{1}{L}
\sum_{t=1}^{L}
\left(
X_{n,t,c}
-
\mu_{n,c}^{\CI}
\right)^2,
\label{eq:supp_ci_variance}
\end{equation}
and
\begin{equation}
s_{n,c}^{\CI}
=
\sqrt{
v_{n,c}^{\CI}
+
\epsilon_{n,c}
}
>0.
\label{eq:supp_ci_scale}
\end{equation}
The channel-specific transformation is
\begin{equation}
T_{n,c}^{\CI}(x)
=
\frac{
x-\mu_{n,c}^{\CI}
}{
s_{n,c}^{\CI}
}.
\label{eq:supp_ci_transform}
\end{equation}

\begin{proposition}[CI normalization does not guarantee OHLC order]
\label{prop:ci_no_guarantee}
There exist valid OHLC values satisfying \(H>O\) for which
\begin{equation}
T_{n,H}^{\CI}(H)
<
T_{n,O}^{\CI}(O).
\label{eq:ci_reversed_order}
\end{equation}
Therefore, validity in original coordinates does not imply validity in
CI-normalized coordinates.
\end{proposition}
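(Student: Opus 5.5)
This is an existence claim, so I will prove it by exhibiting an explicit context window $\mathbf{X}_n$ together with a valid OHLC value pair $(O,H)$ that satisfies \cref{eq:ci_reversed_order}. The idea is to make the two channels' CI statistics differ enough that the separate affine maps reverse the order. The simplest mechanism is a large difference in channel means. If $\mu^{\CI}_{n,H}$ is much larger than $\mu^{\CI}_{n,O}$ while the scales are comparable, then a high value only slightly above the open maps to a strongly negative normalized value, while the open maps near zero or above.

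\textbf{Construction.} I would take $L=2$ and choose every context row to be a valid candle. Concretely, set $O_{n,1}=O_{n,2}=C_{n,1}=C_{n,2}$ equal to values $0$ and $2$ at the two time steps, $H_{n,t}=O_{n,t}+10$, and $L_{n,t}=O_{n,t}-1$. Each row is feasible. The channel statistics are then $\mu_{n,O}^{\CI}=1$, $\mu_{n,H}^{\CI}=11$, and $v_{n,O}^{\CI}=v_{n,H}^{\CI}=1$. Because both variances are equal, the fixed or dynamic $\epsilon$ only needs to be handled in one respect, namely that $s_{n,O}^{\CI},s_{n,H}^{\CI}>0$. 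I will fix $\epsilon^{\mathrm{fix}}$ for definiteness. Both scales are then near $1$, and I can carry them symbolically as $s_O$ and $s_H$ rather than computing them exactly.

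\textbf{Verification.} I evaluate at the valid pair $O=1$, $H=2$, so $H>O$; for example, a target candle with $C=1$ and $L=0$ is valid. This gives $T_{n,O}^{\CI}(O)=0$ and $T_{n,H}^{\CI}(H)=-9/s_H<0$, which establishes \cref{eq:ci_reversed_order}. The conclusion that validity in original coordinates does not imply validity in CI-normalized coordinates follows at once, since $\widetilde H<\widetilde O$ violates $\widetilde H\ge\max(\widetilde O,\widetilde C)$.

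\textbf{Main obstacle.} The only delicate point is showing that the reversal does not depend on the particular $\epsilon$ formulation. If $\epsilon$ differs across channels, as with the dynamic $\epsilon$ that depends on each channel's mean, the two scales differ. In this example that does not matter, because the sign of $T_{n,H}^{\CI}(H)$ depends only on $H-\mu_{n,H}^{\CI}<0$, and $T_{n,O}^{\CI}(O)=0$ holds for any positive scale. Choosing $O$ equal to the open channel's mean is therefore what makes the argument robust. This contrasts with \cref{thm:cd_order_preservation}, where the shared location cancels in the difference; here the mismatch $\mu_{n,H}^{\CI}-\mu_{n,O}^{\CI}$ does not cancel.
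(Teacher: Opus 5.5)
Your proof is correct and follows the same strategy as the paper, an explicit numerical counterexample. The paper takes $O=10$, $H=11$ and simply posits the channel statistics $\mu_O=9$, $s_O=0.5$, $\mu_H=10.8$, $s_H=0.2$. You instead derive the statistics from an explicit valid two-step context window and choose $O=\mu_{n,O}^{\CI}$, so the reversal holds for every positive $\epsilon$; this makes your version slightly more complete. One small fix: the chain $O_{n,1}=O_{n,2}=C_{n,1}=C_{n,2}$ literally forces all four entries to be equal, whereas you mean $O_{n,t}=C_{n,t}$ with value $0$ at $t=1$ and $2$ at $t=2$.
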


\begin{proof}
Consider
\begin{equation}
O=10,
\qquad
H=11,
\label{eq:ci_counterexample_values}
\end{equation}
so that \(H>O\). Let the channel statistics be
\begin{equation}
\mu_O=9,
\qquad
s_O=0.5,
\qquad
\mu_H=10.8,
\qquad
s_H=0.2.
\label{eq:ci_counterexample_statistics}
\end{equation}
The normalized values are
\begin{equation}
\widetilde O^{\CI}
=
\frac{10-9}{0.5}
=
2,
\label{eq:ci_counterexample_open}
\end{equation}
and
\begin{equation}
\widetilde H^{\CI}
=
\frac{11-10.8}{0.2}
=
1.
\label{eq:ci_counterexample_high}
\end{equation}
Hence
\begin{equation}
H>O
\qquad\text{but}\qquad
\widetilde H^{\CI}
<
\widetilde O^{\CI}.
\label{eq:ci_counterexample_conclusion}
\end{equation}
The original ordering is therefore not guaranteed.
\end{proof}

\begin{remark}
The proposition does not state that CI normalization always destroys
OHLC ordering. It states only that no universal implication of the form
\begin{equation}
H\geq O
\Longrightarrow
\widetilde H^{\CI}
\geq
\widetilde O^{\CI}
\label{eq:ci_invalid_implication}
\end{equation}
can be established without additional restrictions on the
channel-specific statistics.
\end{remark}

\begin{remark}
When the physics loss is evaluated in CI-normalized coordinates, it
enforces a representation-space ordering rather than an exactly
equivalent raw-space ordering. Original-space OHLC validity must
therefore be evaluated after inverse normalization.
\end{remark}

\subsection{Mathematical Analysis of Dynamic Epsilon}
\label{subsec:dynamic_epsilon_analysis}

Let \(x_1,\ldots,x_L\) denote one context sequence with empirical mean
and variance
\begin{equation}
\mu
=
\frac{1}{L}
\sum_{t=1}^{L}
x_t,
\label{eq:epsilon_mean}
\end{equation}
\begin{equation}
v
=
\frac{1}{L}
\sum_{t=1}^{L}
(x_t-\mu)^2.
\label{eq:epsilon_variance}
\end{equation}

The fixed-epsilon normalization is
\begin{equation}
\widetilde x_t^{\mathrm{fix}}
=
\frac{
x_t-\mu
}{
\sqrt{
v+\epsilon_0
}
},
\qquad
\epsilon_0=10^{-5}.
\label{eq:fixed_epsilon_normalization}
\end{equation}
CryptoL uses
\begin{equation}
\epsilon^{\mathrm{dyn}}
=
\lambda
\left(
\mu^2+\delta
\right),
\qquad
\lambda=10^{-5},
\qquad
\delta=10^{-12},
\label{eq:dynamic_epsilon_definition}
\end{equation}
and
\begin{equation}
\widetilde x_t^{\mathrm{dyn}}
=
\frac{
x_t-\mu
}{
\sqrt{
v+
\lambda(\mu^2+\delta)
}
}.
\label{eq:dynamic_epsilon_normalization}
\end{equation}

\begin{proposition}[Non-equivariance of fixed epsilon]
\label{prop:fixed_epsilon_nonequivariant}
Let
\begin{equation}
x_t'=ax_t,
\qquad
a>0.
\label{eq:positive_rescaling}
\end{equation}
Then fixed-epsilon normalization satisfies
\begin{equation}
\widetilde x_t^{\prime\,\mathrm{fix}}
=
\frac{
x_t-\mu
}{
\sqrt{
v+\epsilon_0/a^2
}
}.
\label{eq:fixed_epsilon_scaled}
\end{equation}
Consequently,
\begin{equation}
\widetilde x_t^{\prime\,\mathrm{fix}}
\neq
\widetilde x_t^{\mathrm{fix}}
\label{eq:fixed_epsilon_noninvariance}
\end{equation}
in general.
\end{proposition}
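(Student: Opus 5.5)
The plan is a direct substitution: compute the rescaled statistics, factor $a$ out of the normalization, and then exhibit one explicit case where the two normalizations differ.

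\emph{Step 1: rescaled statistics.} With $x_t'=ax_t$, linearity of the empirical mean in \cref{eq:epsilon_mean} gives $\mu'=a\mu$. Homogeneity of the centered second moment in \cref{eq:epsilon_variance} gives $v'=a^2v$.

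\emph{Step 2: cancel the factor $a$.} Substitute these into \cref{eq:fixed_epsilon_normalization}:
\begin{equation*}
\widetilde x_t^{\prime\,\mathrm{fix}}
=
\frac{a(x_t-\mu)}{\sqrt{a^2v+\epsilon_0}}
=
\frac{a(x_t-\mu)}{a\sqrt{v+\epsilon_0/a^2}}.
\end{equation*}
Since $a>0$, we have $\sqrt{a^2}=a$, so the factor $a$ cancels. This yields \cref{eq:fixed_epsilon_scaled}.

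\emph{Step 3: non-equivariance.} Compare the two expressions whenever $x_t\neq\mu$, which requires a non-constant context. For such $t$, equality $\widetilde x_t^{\prime\,\mathrm{fix}}=\widetilde x_t^{\mathrm{fix}}$ would force $\sqrt{v+\epsilon_0/a^2}=\sqrt{v+\epsilon_0}$. That means $\epsilon_0/a^2=\epsilon_0$, and since $\epsilon_0>0$, this gives $a=1$. So for any $a\neq1$ and any context with $x_t\neq\mu$, the two normalized values differ, which is what ``in general'' means.

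A concrete witness makes this explicit. Take $L=2$ with $x=(0,2)$, so $\mu=1$ and $v=1$. Then $\widetilde x_2^{\mathrm{fix}}=1/\sqrt{1+\epsilon_0}$, while with $a=10^{-3}$ we get $\widetilde x_2^{\prime\,\mathrm{fix}}=1/\sqrt{1+10}$. This also shows that the stabilizer dominates at small scales.

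The only delicate point is stating precisely what ``$\neq$ in general'' means. I would phrase it as: for every $a\neq1$, inequality holds at every index with $x_t\neq\mu$. Equality can occur only in the degenerate cases $a=1$ or $x_t=\mu$, where both sides are equal (zero in the latter case). The rest is routine algebra.
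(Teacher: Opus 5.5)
Your proposal is correct and follows the paper's proof: compute $\mu'=a\mu$ and $v'=a^2v$, substitute them, and use $a>0$ to cancel the factor $a$. Your Step 3 goes slightly further than the paper, which only remarks that the denominator depends on $a$: you show that equality forces $a=1$ whenever $x_t\neq\mu$, and your numeric witness checks out, which makes the phrase ``in general'' precise.
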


\begin{proof}
Under \cref{eq:positive_rescaling},
\begin{equation}
\mu'=a\mu,
\qquad
v'=a^2v.
\label{eq:scaled_mean_variance}
\end{equation}
Therefore,
\begin{align}
\widetilde x_t^{\prime\,\mathrm{fix}}
&=
\frac{
ax_t-a\mu
}{
\sqrt{
a^2v+\epsilon_0
}
}
\nonumber\\
&=
\frac{
a(x_t-\mu)
}{
a\sqrt{
v+\epsilon_0/a^2
}
}
\nonumber\\
&=
\frac{
x_t-\mu
}{
\sqrt{
v+\epsilon_0/a^2
}
},
\label{eq:fixed_epsilon_proof}
\end{align}
where \(a>0\) was used. The denominator depends on \(a\), proving the
claim.
\end{proof}

\begin{theorem}[Approximate scale equivariance of dynamic epsilon]
\label{thm:dynamic_epsilon_equivariance}
Under the rescaling in \cref{eq:positive_rescaling}, dynamic-epsilon
normalization satisfies
\begin{equation}
\widetilde x_t^{\prime\,\mathrm{dyn}}
=
\frac{
x_t-\mu
}{
\sqrt{
v+\lambda\mu^2+\lambda\delta/a^2
}
}.
\label{eq:dynamic_epsilon_scaled}
\end{equation}
If \(\delta=0\), then
\begin{equation}
\widetilde x_t^{\prime\,\mathrm{dyn}}
=
\widetilde x_t^{\mathrm{dyn}}
\label{eq:dynamic_epsilon_exact_invariance}
\end{equation}
for every \(a>0\). For \(\delta>0\), the transformation is
approximately equivariant whenever
\begin{equation}
\delta
\ll
\mu^2
\qquad\text{and}\qquad
\frac{\delta}{a^2}
\ll
v+\lambda\mu^2.
\label{eq:dynamic_epsilon_approx_condition}
\end{equation}
\end{theorem}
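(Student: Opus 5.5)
The plan mirrors the proof of \cref{prop:fixed_epsilon_nonequivariant}: substitute the rescaled statistics, factor out $a$, and read off the effect of $\delta$. Under \cref{eq:positive_rescaling}, \cref{eq:scaled_mean_variance} gives $\mu'=a\mu$ and $v'=a^2v$. The dynamic stabilizer of the rescaled sequence is therefore
\begin{equation*}
\epsilon^{\prime\,\mathrm{dyn}}=\lambda\left(a^2\mu^2+\delta\right)=a^2\left(\lambda\mu^2+\lambda\delta/a^2\right).
\end{equation*}
Substituting into \cref{eq:dynamic_epsilon_normalization}, the numerator is $a(x_t-\mu)$ and the radicand is $a^2\left(v+\lambda\mu^2+\lambda\delta/a^2\right)$. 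Since $a>0$, we have $\sqrt{a^2}=a$, and cancelling $a$ yields \cref{eq:dynamic_epsilon_scaled}. This is the same factorization step as in \cref{eq:fixed_epsilon_proof}. The only difference is that the $\mu$-dependent part of the stabilizer now scales like the variance, so it survives cancellation unchanged.

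For the exact case $\delta=0$, the radicand in \cref{eq:dynamic_epsilon_scaled} becomes $v+\lambda\mu^2$. This is exactly the radicand of $\widetilde x_t^{\mathrm{dyn}}$ with $\delta=0$, and it does not depend on $a$, which gives \cref{eq:dynamic_epsilon_exact_invariance}. One side condition needs a remark: the denominator must be nonzero. That holds whenever $v+\lambda\mu^2>0$, i.e.\ unless the whole context is identically zero, in which case both sides are degenerate in the same way.

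For $\delta>0$, I will compare the two denominators directly. The rescaled one is $v+\lambda\mu^2+\lambda\delta/a^2$ and the original is $v+\lambda\mu^2+\lambda\delta$. Write $B=v+\lambda\mu^2$. The ratio of normalized values is
\begin{equation*}
\frac{\widetilde x_t^{\prime\,\mathrm{dyn}}}{\widetilde x_t^{\mathrm{dyn}}}=\sqrt{\frac{B+\lambda\delta}{B+\lambda\delta/a^2}}.
\end{equation*}
It suffices to show that both $\lambda\delta$ and $\lambda\delta/a^2$ are negligible relative to $B$, so that the ratio equals $1+o(1)$.

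The two conditions in \cref{eq:dynamic_epsilon_approx_condition} supply exactly this. The condition $\delta\ll\mu^2$ gives $\lambda\delta\ll\lambda\mu^2\le B$, which handles the original scale. The condition $\delta/a^2\ll B$, together with $\lambda\le1$, gives $\lambda\delta/a^2\ll B$, which handles the rescaled one. A first-order expansion of the square root then shows that the relative deviation is bounded by roughly $\tfrac{1}{2}\lambda\delta\,|1-a^{-2}|/B$.

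The main obstacle is not algebraic. It is stating ``approximately equivariant'' precisely, since the theorem leaves it informal. I will fix the meaning as this relative-error bound on the ratio and note that it tends to zero under \cref{eq:dynamic_epsilon_approx_condition}.
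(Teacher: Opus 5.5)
Your proposal is correct and follows the paper's proof. Both substitute $\mu'=a\mu$ and $v'=a^2v$, factor $a^2$ out of the radicand and cancel it using $a>0$, and then note that $\delta=0$ removes the only $a$-dependent term $\lambda\delta/a^2$. Your ratio $\sqrt{(B+\lambda\delta)/(B+\lambda\delta/a^2)}$, with each of the two stated conditions controlling one perturbation term, makes quantitative the paper's one-line claim that this term is negligible.
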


\begin{proof}
Using \cref{eq:scaled_mean_variance},
\begin{align}
\widetilde x_t^{\prime\,\mathrm{dyn}}
&=
\frac{
a(x_t-\mu)
}{
\sqrt{
a^2v+
\lambda(a^2\mu^2+\delta)
}
}
\nonumber\\
&=
\frac{
a(x_t-\mu)
}{
a
\sqrt{
v+\lambda\mu^2+\lambda\delta/a^2
}
}
\nonumber\\
&=
\frac{
x_t-\mu
}{
\sqrt{
v+\lambda\mu^2+\lambda\delta/a^2
}
},
\label{eq:dynamic_epsilon_proof}
\end{align}
which proves \cref{eq:dynamic_epsilon_scaled}. If \(\delta=0\), the
right-hand side equals
\cref{eq:dynamic_epsilon_normalization}. For \(\delta>0\), the only
scale-dependent difference is the term \(\lambda\delta/a^2\), which is
negligible under \cref{eq:dynamic_epsilon_approx_condition}.
\end{proof}

\begin{proposition}[Dependence on relative variability]
\label{prop:dynamic_epsilon_cv}
Assume \(\mu\neq0\), and define the squared coefficient of variation
as
\begin{equation}
\operatorname{CV}^2
=
\frac{v}{\mu^2}.
\label{eq:coefficient_variation}
\end{equation}
Ignoring the negligible \(\delta\) term, the ratio between the dynamic
regularizer and the empirical variance is
\begin{equation}
\frac{
\epsilon^{\mathrm{dyn}}
}{
v
}
=
\frac{
\lambda
}{
\operatorname{CV}^2
}.
\label{eq:dynamic_epsilon_cv_ratio}
\end{equation}
Thus, for assets with comparable relative volatility, the effect of
dynamic epsilon is approximately independent of their absolute quoted
price.
\end{proposition}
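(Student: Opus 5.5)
The claim is a direct algebraic identity, so I would prove it by substituting the definition in \cref{eq:dynamic_epsilon_definition} with \(\delta\) set to zero, as the statement allows. This gives
\begin{equation*}
\epsilon^{\mathrm{dyn}}=\lambda\mu^2 .
\end{equation*}
Dividing by \(v\) and using \(\mu\neq0\), I would write
\begin{equation*}
\frac{\lambda\mu^2}{v}=\frac{\lambda}{v/\mu^2}.
\end{equation*}
The denominator is \(\operatorname{CV}^2\) by \cref{eq:coefficient_variation}, which yields \cref{eq:dynamic_epsilon_cv_ratio}.

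One degenerate case needs a word: the ratio requires \(v>0\). If \(v=0\) (a constant context), the ratio is undefined. There \(\operatorname{CV}^2=0\), and the identity should be read as both sides being \(+\infty\) or excluded. I would state that non-constant contexts are assumed.

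I would also quantify the error from dropping \(\delta\). Keeping it, the exact ratio is
\begin{equation*}
\frac{\lambda}{\operatorname{CV}^2}\left(1+\frac{\delta}{\mu^2}\right).
\end{equation*}
This differs from \cref{eq:dynamic_epsilon_cv_ratio} by the relative factor \(\delta/\mu^2\). That factor is negligible under the same condition \(\delta\ll\mu^2\) as in \cref{thm:dynamic_epsilon_equivariance}. This check justifies the word ``ignoring'' in the statement.

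For the final sentence, I would observe that the right-hand side depends only on \(\operatorname{CV}^2\), which is invariant under \(x_t\mapsto ax_t\) for \(a>0\) since both \(v\) and \(\mu^2\) scale by \(a^2\). So two assets with equal relative volatility receive the same ratio \(\epsilon^{\mathrm{dyn}}/v\) regardless of price level. This is consistent with \cref{thm:dynamic_epsilon_equivariance}.

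For contrast, I would note that the fixed-epsilon ratio \(\epsilon_0/v\) equals \(\epsilon_0/(\mu^2\operatorname{CV}^2)\). It therefore blows up for low-priced assets, which is the behaviour described in \cref{prop:fixed_epsilon_nonequivariant}.

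There is no real obstacle here. The only care points are the \(v>0\) assumption and making the \(\delta\) approximation explicit.
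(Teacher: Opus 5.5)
Your proposal is correct and follows the paper's proof: substitute $\epsilon^{\mathrm{dyn}}=\lambda\mu^2$ (with $\delta$ dropped), then rewrite $\lambda\mu^2/v$ as $\lambda/(v/\mu^2)=\lambda/\operatorname{CV}^2$. Your additional remarks are correct refinements that the paper leaves implicit: the requirement $v>0$, the exact factor $1+\delta/\mu^2$ when $\delta$ is kept, and the scale invariance of $\operatorname{CV}^2$.
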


\begin{proof}
From \cref{eq:dynamic_epsilon_definition}, with \(\delta\) omitted,
\begin{equation}
\frac{
\epsilon^{\mathrm{dyn}}
}{
v
}
=
\frac{
\lambda\mu^2
}{
v
}
=
\frac{
\lambda
}{
v/\mu^2
}
=
\frac{
\lambda
}{
\operatorname{CV}^2
}.
\label{eq:dynamic_epsilon_cv_proof}
\end{equation}
\end{proof}

\begin{remark}
Dynamic epsilon is not universally preferable. If
\begin{equation}
v\ll\lambda\mu^2,
\label{eq:dynamic_epsilon_dominance_condition}
\end{equation}
then the regularization term dominates the empirical variance and may
compress variations in nearly constant contexts. Its value should
therefore be examined empirically across assets and volatility regimes.
\end{remark}

\subsection{Jacobian Analysis of Scale-Dominance Mitigation}
\label{subsec:jacobian_analysis}

Let
\begin{equation}
\widetilde{\mathbf{X}}_n
=
\mathcal{N}_n(\mathbf{X}_n)
\label{eq:jacobian_normalized_input}
\end{equation}
denote the normalized context, and let the forecasting backbone produce
\begin{equation}
\widehat{\mathbf{z}}_n
=
f_{\bm{\theta}}
\left(
\widetilde{\mathbf{X}}_n
\right).
\label{eq:jacobian_normalized_prediction}
\end{equation}
The normalized target and normalized residual are
\begin{equation}
\mathbf{z}_n
=
\mathbf{D}_n^{-1}
\left(
\mathbf{y}_n-\bm{\mu}_n
\right),
\qquad
\mathbf{e}_n(\bm{\theta})
=
\widehat{\mathbf{z}}_n-\mathbf{z}_n,
\label{eq:jacobian_normalized_target_residual}
\end{equation}
where \(\mathbf{D}_n\succ0\) is diagonal. The reconstructed prediction
is
\begin{equation}
\widehat{\mathbf{y}}_n
=
\bm{\mu}_n
+
\mathbf{D}_n
\widehat{\mathbf{z}}_n.
\label{eq:jacobian_denormalized_prediction}
\end{equation}

The normalized prediction Jacobian is
\begin{equation}
\mathbf{J}_n
=
\frac{
\partial
\widehat{\mathbf{z}}_n
}{
\partial\bm{\theta}
}
\in\R^{d\times p}.
\label{eq:normalized_prediction_jacobian}
\end{equation}

\begin{theorem}[Residual-Jacobian scaling under original-space RevIN]
\label{thm:jacobian_scale_weighting}
Assume that \(\bm{\mu}_n\) and \(\mathbf{D}_n\) are independent of
\(\bm{\theta}\). Define the original-space residual
\begin{equation}
\mathbf{r}_n^{\Raw}
=
\widehat{\mathbf{y}}_n-\mathbf{y}_n.
\label{eq:raw_residual}
\end{equation}
Then
\begin{equation}
\mathbf{r}_n^{\Raw}
=
\mathbf{D}_n\mathbf{e}_n,
\label{eq:raw_normalized_residual}
\end{equation}
and the Jacobian of the original-space residual is
\begin{equation}
\mathbf{J}_n^{\Raw}
=
\frac{
\partial
\mathbf{r}_n^{\Raw}
}{
\partial\bm{\theta}
}
=
\mathbf{D}_n\mathbf{J}_n.
\label{eq:raw_residual_jacobian}
\end{equation}
Consequently, the original-space MSE gradient contribution is
\begin{equation}
\mathbf{g}_n^{\Raw}
=
\left(
\mathbf{J}_n^{\Raw}
\right)^{\top}
\mathbf{r}_n^{\Raw}
=
\mathbf{J}_n^{\top}
\mathbf{D}_n^{2}
\mathbf{e}_n.
\label{eq:raw_jacobian_gradient}
\end{equation}
In contrast, the TP-RevIN gradient contribution is
\begin{equation}
\mathbf{g}_n^{\TP}
=
\mathbf{J}_n^{\top}
\mathbf{e}_n.
\label{eq:tp_jacobian_gradient}
\end{equation}
\end{theorem}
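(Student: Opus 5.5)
The plan is a direct computation in the vectorized coordinates of \cref{eq:jacobian_normalized_target_residual,eq:jacobian_denormalized_prediction}, following the same route as the proof of Proposition~1 in the main text.

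\emph{Residual identity.} The first step establishes \cref{eq:raw_normalized_residual}. From \cref{eq:jacobian_normalized_target_residual}, and since \(\mathbf{D}_n\) is invertible (diagonal with positive entries), we have \(\mathbf{y}_n=\bm{\mu}_n+\mathbf{D}_n\mathbf{z}_n\). Subtracting this from \cref{eq:jacobian_denormalized_prediction} cancels \(\bm{\mu}_n\) and gives
\begin{equation*}
\mathbf{r}_n^{\Raw}=\mathbf{D}_n(\widehat{\mathbf{z}}_n-\mathbf{z}_n)=\mathbf{D}_n\mathbf{e}_n .
\end{equation*}

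\emph{Jacobian identity.} Next, differentiate \(\mathbf{r}_n^{\Raw}=\mathbf{D}_n\mathbf{e}_n\) with respect to \(\bm{\theta}\). The hypothesis that \(\bm{\mu}_n\) and \(\mathbf{D}_n\) do not depend on \(\bm{\theta}\) enters here. It holds because these statistics are functions of the context \(\mathbf{X}_n\) alone, so no product-rule terms appear. The target \(\mathbf{z}_n\) is likewise parameter-free, so \(\partial\mathbf{e}_n/\partial\bm{\theta}=\partial\widehat{\mathbf{z}}_n/\partial\bm{\theta}=\mathbf{J}_n\) by \cref{eq:normalized_prediction_jacobian}. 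Linearity of differentiation then yields \(\mathbf{J}_n^{\Raw}=\mathbf{D}_n\mathbf{J}_n\), which is \cref{eq:raw_residual_jacobian}.

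\emph{Gradients.} The per-sample raw loss is \(\tfrac12\|\mathbf{r}_n^{\Raw}\|_2^2\). By the chain rule its gradient is \((\mathbf{J}_n^{\Raw})^{\top}\mathbf{r}_n^{\Raw}\). Substituting the two identities above gives \(\mathbf{J}_n^{\top}\mathbf{D}_n^{\top}\mathbf{D}_n\mathbf{e}_n\), and this equals \(\mathbf{J}_n^{\top}\mathbf{D}_n^{2}\mathbf{e}_n\) because \(\mathbf{D}_n\) is diagonal, hence symmetric. That is \cref{eq:raw_jacobian_gradient}. Repeating the same chain-rule step for the TP-RevIN per-sample loss \(\tfrac12\|\mathbf{e}_n\|_2^2\) gives \(\mathbf{J}_n^{\top}\mathbf{e}_n\), which is \cref{eq:tp_jacobian_gradient}.

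\emph{Main obstacle.} The argument has no genuinely hard step; the only point needing care is stating precisely where the independence assumption is used. In implementations, \(\mathbf{D}_n\) is often computed inside the autodiff graph from inputs. Gradients with respect to \(\bm{\theta}\) still do not flow through it, because \(\mathbf{X}_n\) does not depend on \(\bm{\theta}\). With that remark in place, each displayed identity is a one-line consequence of the previous one.
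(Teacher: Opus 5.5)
Your proposal is correct and follows essentially the same route as the paper's proof. Both arguments obtain the residual identity from $\mathbf{y}_n=\bm{\mu}_n+\mathbf{D}_n\mathbf{z}_n$, then the Jacobian identity from the parameter-independence of $\mathbf{D}_n$ and $\mathbf{z}_n$, and finally the least-squares gradient $(\mathbf{D}_n\mathbf{J}_n)^{\top}(\mathbf{D}_n\mathbf{e}_n)=\mathbf{J}_n^{\top}\mathbf{D}_n^{2}\mathbf{e}_n$, with the TP-RevIN case read off directly.
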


\begin{proof}
From \cref{eq:jacobian_denormalized_prediction} and the identity
\begin{equation}
\mathbf{y}_n
=
\bm{\mu}_n
+
\mathbf{D}_n\mathbf{z}_n,
\label{eq:raw_target_reconstruction}
\end{equation}
we obtain
\begin{align}
\mathbf{r}_n^{\Raw}
&=
\bm{\mu}_n
+
\mathbf{D}_n\widehat{\mathbf{z}}_n
-
\bm{\mu}_n
-
\mathbf{D}_n\mathbf{z}_n
\nonumber\\
&=
\mathbf{D}_n
\left(
\widehat{\mathbf{z}}_n-\mathbf{z}_n
\right)
\nonumber\\
&=
\mathbf{D}_n\mathbf{e}_n,
\label{eq:raw_residual_proof}
\end{align}
proving \cref{eq:raw_normalized_residual}. Since
\(\mathbf{D}_n\) is independent of \(\bm{\theta}\),
\begin{equation}
\mathbf{J}_n^{\Raw}
=
\frac{
\partial
(\mathbf{D}_n\mathbf{e}_n)
}{
\partial\bm{\theta}
}
=
\mathbf{D}_n
\frac{
\partial\mathbf{e}_n
}{
\partial\bm{\theta}
}
=
\mathbf{D}_n\mathbf{J}_n,
\label{eq:raw_jacobian_proof}
\end{equation}
because the target \(\mathbf{z}_n\) is also independent of
\(\bm{\theta}\).

For a least-squares objective, the per-sample gradient is the transpose
of the residual Jacobian multiplied by the residual:
\begin{align}
\mathbf{g}_n^{\Raw}
&=
\left(
\mathbf{D}_n\mathbf{J}_n
\right)^\top
\left(
\mathbf{D}_n\mathbf{e}_n
\right)
\nonumber\\
&=
\mathbf{J}_n^\top
\mathbf{D}_n^\top
\mathbf{D}_n
\mathbf{e}_n
\nonumber\\
&=
\mathbf{J}_n^\top
\mathbf{D}_n^2
\mathbf{e}_n,
\label{eq:raw_gradient_proof}
\end{align}
where the last equality follows because \(\mathbf{D}_n\) is diagonal
and positive. The TP-RevIN result follows directly from the normalized
residual \(\mathbf{e}_n\) and Jacobian \(\mathbf{J}_n\).
\end{proof}

\begin{corollary}[Scalar-scale gradient weighting]
\label{cor:scalar_jacobian_weighting}
Under CD normalization,
\begin{equation}
\mathbf{D}_n=s_n\mathbf{I}_d,
\label{eq:scalar_scale_matrix}
\end{equation}
and therefore
\begin{equation}
\mathbf{g}_n^{\Raw}
=
s_n^2
\mathbf{J}_n^\top
\mathbf{e}_n
=
s_n^2
\mathbf{g}_n^{\TP}.
\label{eq:scalar_gradient_relation}
\end{equation}
Thus, for equal normalized residual-Jacobian products, original-space
RevIN weights the sample contributions in proportion to \(s_n^2\).
\end{corollary}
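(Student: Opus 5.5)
The corollary follows by specializing \cref{thm:jacobian_scale_weighting} to the CD case, so I will first verify the structural claim $\mathbf{D}_n=s_n\mathbf{I}_d$ and then substitute it into \cref{eq:raw_jacobian_gradient}. Under CD normalization, \cref{eq:supp_cd_transform} applies one shared location $\mu_n^{\CD}$ and one shared scale $s_n^{\CD}$ to every OHLC channel. The matrix $\mathbf{S}_n$ of \cref{eq:tp_normalized_input_target} is therefore $s_n\mathbf{I}_4$. By \cref{eq:vectorized_scale_matrix}, this gives $\mathbf{D}_n=\mathbf{I}_H\otimes s_n\mathbf{I}_4=s_n\mathbf{I}_{4H}=s_n\mathbf{I}_d$ with $d=4H$ as in \cref{eq:vectorized_dimension}. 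Correspondingly, $\bm{\mu}_n=\mu_n^{\CD}\mathbf{1}_d$.

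Next I check the hypotheses of \cref{thm:jacobian_scale_weighting}. Both $s_n$ and $\mu_n^{\CD}$ are functions of the context $\mathbf{X}_n$ alone, including the stabilizer $\epsilon_n$, whether it is fixed or dynamic as in \cref{eq:dynamic_epsilon}. Since the context is data, neither quantity depends on $\bm{\theta}$. In addition, $\mathbf{D}_n\succ0$ because $s_n^{\CD}>0$ by \cref{eq:supp_cd_scale}.

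With the hypotheses in place, I substitute into \cref{eq:raw_jacobian_gradient}. This gives $\mathbf{D}_n^2=s_n^2\mathbf{I}_d$, and since a scalar matrix commutes with everything, $\mathbf{J}_n^\top\mathbf{D}_n^2\mathbf{e}_n=s_n^2\mathbf{J}_n^\top\mathbf{e}_n$. Comparing with \cref{eq:tp_jacobian_gradient}, the right-hand side is exactly $s_n^2\mathbf{g}_n^{\TP}$, which establishes \cref{eq:scalar_gradient_relation}.

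Finally, the interpretive sentence follows by summing over samples. The aggregate gradient $\frac1N\sum_n\mathbf{g}_n^{\Raw}$ equals $\frac1N\sum_n s_n^2\mathbf{g}_n^{\TP}$. So if two samples share the same normalized product $\mathbf{J}_n^\top\mathbf{e}_n$, their original-space contributions stand in the ratio $s_n^2/s_m^2$. This matches \cref{eq:raw_scalar_scale_gradient}.

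There is no real obstacle here. The only point needing care is the identification $\mathbf{D}_n=s_n\mathbf{I}_d$ through the Kronecker structure, together with the fact that $\epsilon_n$ depends only on the context.
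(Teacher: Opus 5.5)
Your proposal is correct and follows the paper's route: the corollary is a direct specialization of the residual--Jacobian scaling theorem, using $\mathbf{D}_n=\mathbf{I}_H\otimes s_n\mathbf{I}_4=s_n\mathbf{I}_d$ and substituting into $\mathbf{J}_n^\top\mathbf{D}_n^2\mathbf{e}_n$. Your explicit check that $s_n$, including a dynamic $\epsilon_n$, depends only on the context, and hence satisfies the theorem's $\bm{\theta}$-independence hypothesis, is a worthwhile detail that the paper leaves implicit.
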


\begin{remark}
The relation in \cref{eq:scalar_gradient_relation} does not imply that
every high-scale sample necessarily has a larger gradient. The complete
gradient also depends on
\begin{equation}
\mathbf{J}_n^\top\mathbf{e}_n.
\label{eq:intrinsic_gradient_geometry}
\end{equation}
TP-RevIN removes the explicit scale multiplier but does not equalize
residuals, Jacobians, gradient directions, sample frequencies, or task
difficulty.
\end{remark}

\subsubsection{Stacked Jacobian geometry}

Stack the normalized residuals and Jacobians as
\begin{equation}
\mathbf{e}
=
\col(
\mathbf{e}_1,\ldots,\mathbf{e}_N
),
\qquad
\mathbf{J}
=
\col(
\mathbf{J}_1,\ldots,\mathbf{J}_N
).
\label{eq:stacked_residual_jacobian}
\end{equation}
Define
\begin{equation}
\mathbf{S}
=
\diag(
\mathbf{D}_1,\ldots,\mathbf{D}_N
).
\label{eq:block_scale_matrix}
\end{equation}
Then
\begin{equation}
\mathbf{r}^{\Raw}
=
\mathbf{S}\mathbf{e},
\qquad
\mathbf{J}^{\Raw}
=
\mathbf{S}\mathbf{J},
\label{eq:stacked_raw_geometry}
\end{equation}
whereas TP-RevIN uses
\begin{equation}
\mathbf{r}^{\TP}
=
\mathbf{e},
\qquad
\mathbf{J}^{\TP}
=
\mathbf{J}.
\label{eq:stacked_tp_geometry}
\end{equation}

\begin{theorem}[Conditional Jacobian conditioning result]
\label{thm:jacobian_conditioning}
Assume CD normalization and an orthogonal decomposition
\begin{equation}
\R^p
=
V_1\oplus\cdots\oplus V_A.
\label{eq:orthogonal_parameter_decomposition}
\end{equation}
Suppose the normalized Jacobian of asset \(a\) satisfies
\begin{equation}
\mathbf{J}_a^\top\mathbf{J}_a
=
\mathbf{P}_a,
\label{eq:isometric_asset_jacobian}
\end{equation}
where \(\mathbf{P}_a\) is the orthogonal projector onto \(V_a\), and
\begin{equation}
\sum_{a=1}^{A}
\mathbf{P}_a
=
\mathbf{I}_p.
\label{eq:projector_partition}
\end{equation}
If asset \(a\) has scalar scale \(s_a>0\), then
\begin{equation}
\cond_2
\left(
\mathbf{J}^{\TP}
\right)
=
1,
\label{eq:tp_jacobian_condition_number}
\end{equation}
whereas
\begin{equation}
\cond_2
\left(
\mathbf{J}^{\Raw}
\right)
=
\frac{
s_{\max}
}{
s_{\min}
}.
\label{eq:raw_jacobian_condition_number}
\end{equation}
\end{theorem}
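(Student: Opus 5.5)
The plan is to compute the Gram matrices $\mathbf{J}^\top\mathbf{J}$ and $(\mathbf{J}^{\Raw})^\top\mathbf{J}^{\Raw}$ explicitly and read off their extreme eigenvalues. Since $\cond_2(\mathbf{M})=\sigma_{\max}(\mathbf{M})/\sigma_{\min}(\mathbf{M})$ and the singular values of $\mathbf{M}$ are the square roots of the eigenvalues of $\mathbf{M}^\top\mathbf{M}$, it suffices to diagonalize these $p\times p$ matrices. I interpret $\mathbf{J}_a$ as the block of $\mathbf{J}$ collecting the rows of asset $a$, so that $\mathbf{J}=\col(\mathbf{J}_1,\ldots,\mathbf{J}_A)$. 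By \cref{eq:stacked_raw_geometry}, together with CD normalization and per-asset scalar scales, we have $\mathbf{J}^{\Raw}=\col(s_1\mathbf{J}_1,\ldots,s_A\mathbf{J}_A)$.

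\textbf{Step 1 (TP Gram matrix).} Block multiplication gives
\begin{equation*}
\mathbf{J}^\top\mathbf{J}=\sum_{a=1}^{A}\mathbf{J}_a^\top\mathbf{J}_a=\sum_{a=1}^{A}\mathbf{P}_a=\mathbf{I}_p,
\end{equation*}
using \cref{eq:isometric_asset_jacobian} and \cref{eq:projector_partition}. Hence every singular value of $\mathbf{J}^{\TP}$ equals $1$, and $\cond_2(\mathbf{J}^{\TP})=1$.

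\textbf{Step 2 (Raw Gram matrix).} By the same computation,
\begin{equation*}
(\mathbf{J}^{\Raw})^\top\mathbf{J}^{\Raw}=\sum_{a=1}^{A}s_a^2\mathbf{P}_a.
\end{equation*}
The projectors come from the orthogonal decomposition \cref{eq:orthogonal_parameter_decomposition}, so they are mutually orthogonal: $\mathbf{P}_a\mathbf{P}_b=0$ for $a\neq b$. It follows that each nonzero $v\in V_a$ is an eigenvector with eigenvalue $s_a^2$. Because the $V_a$ span $\R^p$, the spectrum is exactly $\{s_a^2: V_a\neq\{0\}\}$. The singular values of $\mathbf{J}^{\Raw}$ are therefore the $s_a$, and taking the ratio of the largest to the smallest gives $s_{\max}/s_{\min}$.

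\textbf{Main obstacle.} The difficulty lies in the hypotheses rather than the computation. First, $s_{\max}$ and $s_{\min}$ must be taken over the assets with $V_a\neq\{0\}$, since an asset with a trivial subspace contributes no singular value; I would state this convention explicitly. Second, the argument needs $\mathbf{J}$ to have full column rank $p$, so that $\sigma_{\min}>0$ and the condition number is finite. This is guaranteed here because $\sum_a\mathbf{P}_a=\mathbf{I}_p$. Third, if an asset has several samples with a common scale $s_a$, the row block $\mathbf{J}_a$ simply stacks them, and the argument is unchanged.
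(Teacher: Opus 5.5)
Your proposal is correct and follows the paper's proof essentially line for line: both compute the Gram matrices $\sum_a \mathbf{P}_a=\mathbf{I}_p$ and $\sum_a s_a^2\mathbf{P}_a$, note that each $V_a$ is an eigenspace with eigenvalue $s_a^2$, and read off the singular values $s_a$. Your added conventions are sensible refinements the paper leaves implicit: taking $s_{\max}$ and $s_{\min}$ only over assets with $V_a\neq\{0\}$, and noting that $\sum_a\mathbf{P}_a=\mathbf{I}_p$ gives full column rank.
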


\begin{proof}
Under the stated assumptions,
\begin{equation}
\left(
\mathbf{J}^{\TP}
\right)^\top
\mathbf{J}^{\TP}
=
\sum_{a=1}^{A}
\mathbf{P}_a
=
\mathbf{I}_p.
\label{eq:tp_jacobian_gram}
\end{equation}
All singular values of \(\mathbf{J}^{\TP}\) are therefore equal to
one.

For original-space RevIN,
\begin{equation}
\left(
\mathbf{J}^{\Raw}
\right)^\top
\mathbf{J}^{\Raw}
=
\sum_{a=1}^{A}
s_a^2\mathbf{P}_a.
\label{eq:raw_jacobian_gram}
\end{equation}
For any \(\mathbf{u}\in V_a\),
\begin{equation}
\left(
\sum_{b=1}^{A}
s_b^2\mathbf{P}_b
\right)
\mathbf{u}
=
s_a^2\mathbf{u}.
\label{eq:raw_jacobian_eigenvector}
\end{equation}
Thus the singular values of \(\mathbf{J}^{\Raw}\) are the values
\(s_a\), with multiplicities determined by \(\dim(V_a)\). Their ratio
is \(s_{\max}/s_{\min}\).
\end{proof}

\begin{remark}
\Cref{thm:jacobian_conditioning} is conditional. TP-RevIN removes the
row scaling induced by the RevIN inverse map, but it does not
universally guarantee a smaller Jacobian condition number. Intrinsic
anisotropy in the normalized Jacobians may remain.
\end{remark}

\subsection{Hessian Analysis of Scale-Dominance Mitigation}
\label{subsec:hessian_analysis}

For sample \(n\), define the normalized-space MSE
\begin{equation}
\ell_n^{\TP}
=
\frac{1}{2}
\norm{
\mathbf{e}_n
}_2^2.
\label{eq:tp_sample_loss_hessian}
\end{equation}
The original-space RevIN MSE is
\begin{equation}
\ell_n^{\Raw}
=
\frac{1}{2}
\mathbf{e}_n^\top
\mathbf{W}_n
\mathbf{e}_n,
\qquad
\mathbf{W}_n
=
\mathbf{D}_n^\top\mathbf{D}_n
=
\mathbf{D}_n^2.
\label{eq:raw_sample_loss_hessian}
\end{equation}

Let
\begin{equation}
\widehat z_{n,k}
\label{eq:prediction_component}
\end{equation}
denote output component \(k\), and define its parameter Hessian as
\begin{equation}
\mathbf{H}_{n,k}^{f}
=
\nabla_{\bm{\theta}}^2
\widehat z_{n,k}.
\label{eq:output_component_hessian}
\end{equation}

\begin{theorem}[Exact TP-RevIN and original-space Hessians]
\label{thm:exact_hessians}
Assume \(\mathbf{W}_n\) is independent of \(\bm{\theta}\). Then the
exact TP-RevIN Hessian is
\begin{equation}
\mathbf{H}_n^{\TP}
=
\mathbf{J}_n^\top\mathbf{J}_n
+
\sum_{k=1}^{d}
e_{n,k}
\mathbf{H}_{n,k}^{f}.
\label{eq:exact_tp_hessian}
\end{equation}
The exact original-space RevIN Hessian is
\begin{equation}
\mathbf{H}_n^{\Raw}
=
\mathbf{J}_n^\top
\mathbf{W}_n
\mathbf{J}_n
+
\sum_{k=1}^{d}
\left(
\mathbf{W}_n\mathbf{e}_n
\right)_k
\mathbf{H}_{n,k}^{f}.
\label{eq:exact_raw_hessian}
\end{equation}
Under CD normalization, where
\begin{equation}
\mathbf{W}_n
=
s_n^2\mathbf{I}_d,
\label{eq:scalar_weight_matrix}
\end{equation}
the complete per-sample Hessians satisfy
\begin{equation}
\mathbf{H}_n^{\Raw}
=
s_n^2
\mathbf{H}_n^{\TP}.
\label{eq:scalar_hessian_relation}
\end{equation}
\end{theorem}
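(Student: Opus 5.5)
I will differentiate each per-sample loss twice with respect to \(\bm{\theta}\), using only the chain and product rules. The one structural input is the assumption that \(\mathbf{W}_n\) is independent of \(\bm{\theta}\). Since \(\mathbf{z}_n\) depends only on the context, it is also independent of \(\bm{\theta}\). Hence \(\partial\mathbf{e}_n/\partial\bm{\theta}=\mathbf{J}_n\), and for each component \(\nabla^2_{\bm{\theta}} e_{n,k}=\mathbf{H}^f_{n,k}\).

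\emph{Step 1 (TP-RevIN).} Write \(\ell_n^{\TP}=\tfrac12\sum_{k=1}^d e_{n,k}^2\). The gradient is
\[
\nabla_{\bm{\theta}}\ell_n^{\TP}=\sum_k e_{n,k}\nabla_{\bm{\theta}}\widehat z_{n,k}=\mathbf{J}_n^\top\mathbf{e}_n,
\]
which agrees with \cref{thm:jacobian_scale_weighting}. Differentiating each summand \(e_{n,k}\nabla\widehat z_{n,k}\) again by the product rule gives \(\nabla\widehat z_{n,k}\nabla\widehat z_{n,k}^\top+e_{n,k}\mathbf{H}^f_{n,k}\). Summing over \(k\), the outer-product terms assemble into \(\mathbf{J}_n^\top\mathbf{J}_n\), because the rows of \(\mathbf{J}_n\) are exactly the \(\nabla\widehat z_{n,k}^\top\). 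This yields \cref{eq:exact_tp_hessian}.

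\emph{Step 2 (original space).} Write \(\ell_n^{\Raw}=\tfrac12\sum_{j,k}(\mathbf{W}_n)_{jk}e_{n,j}e_{n,k}\). Since \(\mathbf{W}_n\) is symmetric and constant, the gradient is \(\mathbf{J}_n^\top\mathbf{W}_n\mathbf{e}_n\), matching \cref{eq:raw_jacobian_gradient}. Write this as \(\sum_k(\mathbf{W}_n\mathbf{e}_n)_k\nabla\widehat z_{n,k}\) and differentiate again with the product rule.

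The derivative of the coefficient \((\mathbf{W}_n\mathbf{e}_n)_k\) is \(\sum_j(\mathbf{W}_n)_{kj}\nabla\widehat z_{n,j}\). These first-order pieces combine to \(\sum_{j,k}(\mathbf{W}_n)_{kj}\nabla\widehat z_{n,k}\nabla\widehat z_{n,j}^\top=\mathbf{J}_n^\top\mathbf{W}_n\mathbf{J}_n\). The second-order pieces give \(\sum_k(\mathbf{W}_n\mathbf{e}_n)_k\mathbf{H}^f_{n,k}\), which proves \cref{eq:exact_raw_hessian}.

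The only care needed in this step is index bookkeeping. The argument uses the symmetry of \(\mathbf{W}_n\) (it is diagonal by construction) and the constancy of \(\mathbf{W}_n\), which is exactly what removes any \(\partial\mathbf{W}_n/\partial\bm{\theta}\) terms. I expect this to be the main, if modest, obstacle: the assumption must be invoked explicitly rather than tacitly.

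\emph{Step 3 (CD specialization).} Substitute \(\mathbf{W}_n=s_n^2\mathbf{I}_d\) from \cref{eq:scalar_weight_matrix}. The Gauss–Newton term becomes \(s_n^2\mathbf{J}_n^\top\mathbf{J}_n\), and the curvature coefficients become \((\mathbf{W}_n\mathbf{e}_n)_k=s_n^2e_{n,k}\). Factoring out the scalar \(s_n^2\) and comparing with \cref{eq:exact_tp_hessian} gives \(\mathbf{H}_n^{\Raw}=s_n^2\mathbf{H}_n^{\TP}\).

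As a consistency check, the same conclusion follows immediately from \(\ell_n^{\Raw}=s_n^2\ell_n^{\TP}\) with \(s_n\) constant in \(\bm{\theta}\). I would mention this as an alternative one-line proof of \cref{eq:scalar_hessian_relation}.
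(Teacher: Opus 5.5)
Your proposal is correct and takes essentially the same route as the paper: compute the gradients $\mathbf{J}_n^\top\mathbf{e}_n$ and $\mathbf{J}_n^\top\mathbf{W}_n\mathbf{e}_n$, differentiate once more by the product rule to split off the Gauss--Newton term from the output-curvature term, and then substitute $\mathbf{W}_n=s_n^2\mathbf{I}_d$. Your explicit index bookkeeping and your note that $\mathbf{z}_n$ does not depend on $\bm{\theta}$ spell out what the paper leaves implicit, and your one-line check via $\ell_n^{\Raw}=s_n^2\ell_n^{\TP}$ is a valid shortcut for the CD case.
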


\begin{proof}
The TP-RevIN gradient is
\begin{equation}
\nabla_{\bm{\theta}}
\ell_n^{\TP}
=
\mathbf{J}_n^\top
\mathbf{e}_n.
\label{eq:tp_gradient_hessian_proof}
\end{equation}
Differentiating by the product rule gives
\begin{equation}
\nabla_{\bm{\theta}}^2
\ell_n^{\TP}
=
\mathbf{J}_n^\top\mathbf{J}_n
+
\sum_{k=1}^{d}
e_{n,k}
\nabla_{\bm{\theta}}^2
\widehat z_{n,k},
\label{eq:tp_hessian_proof}
\end{equation}
which proves \cref{eq:exact_tp_hessian}.

For original-space RevIN,
\begin{equation}
\nabla_{\bm{\theta}}
\ell_n^{\Raw}
=
\mathbf{J}_n^\top
\mathbf{W}_n
\mathbf{e}_n.
\label{eq:raw_gradient_hessian_proof}
\end{equation}
Differentiating again gives a term from the derivative of the residual
and a term from the derivative of the Jacobian:
\begin{equation}
\nabla_{\bm{\theta}}^2
\ell_n^{\Raw}
=
\mathbf{J}_n^\top
\mathbf{W}_n
\mathbf{J}_n
+
\sum_{k=1}^{d}
\left(
\mathbf{W}_n\mathbf{e}_n
\right)_k
\mathbf{H}_{n,k}^{f},
\label{eq:raw_hessian_proof}
\end{equation}
proving \cref{eq:exact_raw_hessian}.

If \(\mathbf{W}_n=s_n^2\mathbf{I}_d\), then
\begin{align}
\mathbf{H}_n^{\Raw}
&=
s_n^2
\mathbf{J}_n^\top\mathbf{J}_n
+
s_n^2
\sum_{k=1}^{d}
e_{n,k}
\mathbf{H}_{n,k}^{f}
\nonumber\\
&=
s_n^2
\mathbf{H}_n^{\TP},
\label{eq:scalar_hessian_proof}
\end{align}
which proves \cref{eq:scalar_hessian_relation}.
\end{proof}

\begin{remark}
For CI normalization, \(\mathbf{W}_n\) is generally diagonal but not
a scalar multiple of the identity. In that case, the exact Hessian is
given by \cref{eq:exact_raw_hessian}; it is not generally valid to write
\(\mathbf{H}_n^{\Raw}=s_n^2\mathbf{H}_n^{\TP}\) using one scalar
\(s_n\).
\end{remark}

\subsubsection{Gauss--Newton curvature}

The Gauss--Newton matrices retain the positive-semidefinite
first term of the exact Hessians:
\begin{equation}
\mathbf{G}^{\TP}
=
\frac{1}{N}
\sum_{n=1}^{N}
\mathbf{J}_n^\top
\mathbf{J}_n,
\label{eq:tp_gauss_newton}
\end{equation}
and
\begin{equation}
\mathbf{G}^{\Raw}
=
\frac{1}{N}
\sum_{n=1}^{N}
\mathbf{J}_n^\top
\mathbf{W}_n
\mathbf{J}_n.
\label{eq:raw_gauss_newton}
\end{equation}

Under scalar CD scales,
\begin{equation}
\mathbf{G}^{\Raw}
=
\frac{1}{N}
\sum_{n=1}^{N}
s_n^2
\mathbf{J}_n^\top
\mathbf{J}_n.
\label{eq:raw_gauss_newton_scalar}
\end{equation}

\begin{theorem}[Loewner and eigenvalue bounds]
\label{thm:gauss_newton_bounds}
Assume scalar scales satisfying
\begin{equation}
0<s_{\min}
\leq
s_n
\leq
s_{\max}
<\infty.
\label{eq:scale_bounds}
\end{equation}
Then
\begin{equation}
s_{\min}^2
\mathbf{G}^{\TP}
\preceq
\mathbf{G}^{\Raw}
\preceq
s_{\max}^2
\mathbf{G}^{\TP}.
\label{eq:loewner_bounds}
\end{equation}
On any common parameter subspace on which both matrices are positive
definite,
\begin{equation}
\lambda_{\min}
\left(
\mathbf{G}^{\Raw}
\right)
\geq
s_{\min}^2
\lambda_{\min}
\left(
\mathbf{G}^{\TP}
\right),
\label{eq:min_eigenvalue_bound}
\end{equation}
\begin{equation}
\lambda_{\max}
\left(
\mathbf{G}^{\Raw}
\right)
\leq
s_{\max}^2
\lambda_{\max}
\left(
\mathbf{G}^{\TP}
\right),
\label{eq:max_eigenvalue_bound}
\end{equation}
and
\begin{equation}
\cond
\left(
\mathbf{G}^{\Raw}
\right)
\leq
\left(
\frac{s_{\max}}{s_{\min}}
\right)^2
\cond
\left(
\mathbf{G}^{\TP}
\right).
\label{eq:condition_number_upper_bound}
\end{equation}
\end{theorem}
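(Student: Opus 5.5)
The plan is to work entirely with quadratic forms. Fix an arbitrary $\mathbf{u}\in\R^p$. By \cref{eq:raw_gauss_newton_scalar},
\begin{equation*}
\mathbf{u}^\top\mathbf{G}^{\Raw}\mathbf{u}
=
\frac{1}{N}\sum_{n=1}^{N} s_n^2\,\norm{\mathbf{J}_n\mathbf{u}}_2^2 .
\end{equation*}
Each term $\norm{\mathbf{J}_n\mathbf{u}}_2^2$ is nonnegative. Hence replacing $s_n^2$ by $s_{\min}^2$ or $s_{\max}^2$, using \cref{eq:scale_bounds}, gives
\begin{equation*}
s_{\min}^2\,\mathbf{u}^\top\mathbf{G}^{\TP}\mathbf{u}
\le
\mathbf{u}^\top\mathbf{G}^{\Raw}\mathbf{u}
\le
s_{\max}^2\,\mathbf{u}^\top\mathbf{G}^{\TP}\mathbf{u}.
\end{equation*}
Since $\mathbf{u}$ is arbitrary, this is exactly the Loewner sandwich \cref{eq:loewner_bounds}. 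This step is routine.

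For the eigenvalue bounds, restrict both matrices to the common subspace $V$ on which they are positive definite. The natural choice of $V$ is the orthogonal complement of the shared kernel. Indeed, \cref{eq:loewner_bounds} shows that $\ker\mathbf{G}^{\Raw}=\ker\mathbf{G}^{\TP}=\bigcap_n\ker\mathbf{J}_n$, because $s_{\min}>0$. So $V$ is invariant for both symmetric matrices, and both compressions are positive definite on it.

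Next I will apply the Rayleigh--Ritz characterization on $V$. We have $\lambda_{\min}(\mathbf{G}^{\Raw})=\min_{\mathbf{u}\in V,\norm{\mathbf{u}}=1}\mathbf{u}^\top\mathbf{G}^{\Raw}\mathbf{u}$. By the lower Loewner bound this is at least $s_{\min}^2\min_{\mathbf{u}}\mathbf{u}^\top\mathbf{G}^{\TP}\mathbf{u}=s_{\min}^2\lambda_{\min}(\mathbf{G}^{\TP})$, which is \cref{eq:min_eigenvalue_bound}. The same argument with maxima and the upper bound gives \cref{eq:max_eigenvalue_bound}. If one prefers to avoid Rayleigh quotients, Weyl's monotonicity principle ($\mathbf{A}\preceq\mathbf{B}$ implies $\lambda_i(\mathbf{A})\le\lambda_i(\mathbf{B})$) gives the same conclusions.

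Finally, I will take the ratio of the two eigenvalue bounds:
\begin{equation*}
\cond(\mathbf{G}^{\Raw})
=
\frac{\lambda_{\max}(\mathbf{G}^{\Raw})}{\lambda_{\min}(\mathbf{G}^{\Raw})}
\le
\frac{s_{\max}^2\lambda_{\max}(\mathbf{G}^{\TP})}{s_{\min}^2\lambda_{\min}(\mathbf{G}^{\TP})},
\end{equation*}
which is \cref{eq:condition_number_upper_bound}. The step needing the most care is the meaning of "common parameter subspace", since $\mathbf{G}^{\TP}$ is typically singular when $p\gg Nd$. I will handle it with the shared-kernel observation above, so that the condition numbers are understood as ratios of extreme eigenvalues of the compressions to $V=(\bigcap_n\ker\mathbf{J}_n)^\perp$. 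I also intend to note that the bound is attained in the setting of \cref{thm:jacobian_conditioning}, so the factor $(s_{\max}/s_{\min})^2$ cannot be improved in general.
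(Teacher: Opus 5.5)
Your proposal is correct and follows essentially the same route as the paper: bound the quadratic form $\mathbf{u}^\top\mathbf{G}^{\Raw}\mathbf{u}=\frac{1}{N}\sum_n s_n^2\norm{\mathbf{J}_n\mathbf{u}}_2^2$ termwise to get the Loewner sandwich, then read off the eigenvalue bounds from Rayleigh quotients and take their ratio. Your identification of the common subspace as $(\bigcap_n\ker\mathbf{J}_n)^\perp$ is a useful precision that the paper leaves implicit. The same Rayleigh--Ritz argument, applied to the compressions, also works unchanged on any common subspace on which both matrices are positive definite, which is what the statement allows.
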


\begin{proof}
For any \(\mathbf{u}\in\R^p\),
\begin{align}
\mathbf{u}^\top
\mathbf{G}^{\Raw}
\mathbf{u}
&=
\frac{1}{N}
\sum_{n=1}^{N}
s_n^2
\norm{
\mathbf{J}_n\mathbf{u}
}_2^2.
\label{eq:raw_quadratic_form}
\end{align}
Applying \cref{eq:scale_bounds} termwise gives
\begin{align}
s_{\min}^2
\frac{1}{N}
\sum_{n=1}^{N}
\norm{
\mathbf{J}_n\mathbf{u}
}_2^2
&\leq
\mathbf{u}^\top
\mathbf{G}^{\Raw}
\mathbf{u}
\nonumber\\
&\leq
s_{\max}^2
\frac{1}{N}
\sum_{n=1}^{N}
\norm{
\mathbf{J}_n\mathbf{u}
}_2^2.
\label{eq:quadratic_form_bounds}
\end{align}
The outer expressions are
\begin{equation}
s_{\min}^2
\mathbf{u}^\top
\mathbf{G}^{\TP}
\mathbf{u}
\label{eq:lower_quadratic_form}
\end{equation}
and
\begin{equation}
s_{\max}^2
\mathbf{u}^\top
\mathbf{G}^{\TP}
\mathbf{u},
\label{eq:upper_quadratic_form}
\end{equation}
which proves the Loewner bounds. The eigenvalue inequalities follow
from the Rayleigh quotient, and their ratio gives
\cref{eq:condition_number_upper_bound}.
\end{proof}

\begin{theorem}[Exact scale-induced conditioning under orthogonal asset subspaces]
\label{thm:orthogonal_hessian_conditioning}
Assume the parameter space decomposes into mutually orthogonal
subspaces \(V_1,\ldots,V_A\), and suppose the normalized Gauss--Newton
operator for asset \(a\) is
\begin{equation}
\mathbf{G}_a
=
\lambda_a\mathbf{P}_a,
\qquad
\lambda_a>0,
\label{eq:asset_gauss_newton_block}
\end{equation}
where \(\mathbf{P}_a\) projects onto \(V_a\). Let the asset sampling
probabilities be \(\pi_a>0\). Then
\begin{equation}
\mathbf{G}^{\TP}
=
\bigoplus_{a=1}^{A}
\pi_a\lambda_a\mathbf{I}_{V_a},
\label{eq:tp_block_gauss_newton}
\end{equation}
and
\begin{equation}
\mathbf{G}^{\Raw}
=
\bigoplus_{a=1}^{A}
\pi_as_a^2\lambda_a\mathbf{I}_{V_a}.
\label{eq:raw_block_gauss_newton}
\end{equation}
Consequently,
\begin{equation}
\cond
\left(
\mathbf{G}^{\TP}
\right)
=
\frac{
\max_a \pi_a\lambda_a
}{
\min_a \pi_a\lambda_a
},
\label{eq:tp_block_condition_number}
\end{equation}
whereas
\begin{equation}
\cond
\left(
\mathbf{G}^{\Raw}
\right)
=
\frac{
\max_a \pi_as_a^2\lambda_a
}{
\min_a \pi_as_a^2\lambda_a
}.
\label{eq:raw_block_condition_number}
\end{equation}
If \(\pi_a\) and \(\lambda_a\) are equal across assets, then
\begin{equation}
\cond
\left(
\mathbf{G}^{\Raw}
\right)
=
\left(
\frac{
s_{\max}
}{
s_{\min}
}
\right)^2,
\qquad
\cond
\left(
\mathbf{G}^{\TP}
\right)
=
1.
\label{eq:exact_scale_conditioning}
\end{equation}
\end{theorem}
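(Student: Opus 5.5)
The approach is to write both Gauss--Newton matrices as asset-level averages, substitute the block assumption \cref{eq:asset_gauss_newton_block}, and read off the spectra from the orthogonal decomposition.

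First, group the samples by asset. Under CD normalization each asset $a$ carries a scalar scale $s_a$, so $\mathbf{W}_n = s_a^2\mathbf{I}_d$ for every sample of that asset. I interpret the normalized per-asset Gauss--Newton operator as the conditional expectation $\mathbf{G}_a = \mathbb{E}[\mathbf{J}_n^\top\mathbf{J}_n \mid a]$. This matches the population form of \cref{eq:tp_gauss_newton}, with the empirical average replaced by sampling from $\pi$, in the same way the asset-level risks \cref{eq:raw_asset_risk,eq:tp_asset_risk} were obtained.

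By the law of total expectation,
\begin{equation*}
\mathbf{G}^{\TP} = \sum_a \pi_a \mathbf{G}_a = \sum_a \pi_a\lambda_a\mathbf{P}_a .
\end{equation*}
For the raw matrix I use \cref{eq:raw_gauss_newton_scalar} with $s_n = s_a$ constant inside each asset. The factor $s_a^2$ then comes out of the conditional expectation, giving
\begin{equation*}
\mathbf{G}^{\Raw} = \sum_a \pi_a s_a^2\lambda_a\mathbf{P}_a .
\end{equation*}

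Next, I use the fact that the $V_a$ are mutually orthogonal and sum to $\R^p$, so $\sum_a\mathbf{P}_a = \mathbf{I}_p$ as in \cref{eq:projector_partition}. Hence for $\mathbf{u}\in V_a$ we have $\mathbf{P}_b\mathbf{u} = \delta_{ab}\mathbf{u}$, exactly as in \cref{eq:raw_jacobian_eigenvector}. Therefore $\mathbf{G}^{\TP}\mathbf{u} = \pi_a\lambda_a\mathbf{u}$ and $\mathbf{G}^{\Raw}\mathbf{u} = \pi_a s_a^2\lambda_a\mathbf{u}$. This gives the direct-sum forms \cref{eq:tp_block_gauss_newton,eq:raw_block_gauss_newton}.

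Both matrices are symmetric, and since $\pi_a,\lambda_a,s_a>0$ they are positive definite. Their eigenvalues are exactly the block constants $\pi_a\lambda_a$ and $\pi_a s_a^2\lambda_a$ (each block having $\dim V_a\ge 1$). The condition number is the ratio of the largest to the smallest eigenvalue, which yields \cref{eq:tp_block_condition_number,eq:raw_block_condition_number}. Setting $\pi_a\lambda_a = c$ for all $a$, the common factor $c$ cancels, which gives \cref{eq:exact_scale_conditioning}.

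The only delicate step is the interpretive one: making precise that $\mathbf{G}_a$ is the normalized conditional Gauss--Newton operator and that the scale is constant within each asset. Without constancy of $s_n$ within an asset, the raw matrix would carry $\mathbb{E}[s_n^2\mathbf{J}_n^\top\mathbf{J}_n\mid a]$, which does not reduce to $s_a^2\lambda_a\mathbf{P}_a$. I would therefore state that assumption explicitly, in the same spirit as the "approximately constant context scale" hypothesis behind \cref{eq:raw_asset_risk}. I would also note that each $V_a$ must be nontrivial for its block constant to actually appear in the spectrum.
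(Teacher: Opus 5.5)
Your proposal is correct and follows the paper's argument. Orthogonality of the $V_a$ makes both aggregate Gauss--Newton matrices block diagonal, with constant $\pi_a\lambda_a$ (resp.\ $\pi_a s_a^2\lambda_a$) on $V_a$, and the condition numbers are the ratios of the extreme block constants. You also state explicitly what the paper's terse proof leaves implicit: the aggregation $\mathbf{G}^{\TP}=\sum_a\pi_a\mathbf{G}_a$ with $s_n\equiv s_a$ constant within each asset, and the requirement that each $V_a$ be nontrivial so that its block constant actually appears in the spectrum.
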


\begin{proof}
Because the subspaces are mutually orthogonal, the aggregate
Gauss--Newton matrices are block diagonal. The eigenvalue associated
with \(V_a\) is \(\pi_a\lambda_a\) under TP-RevIN and
\(\pi_as_a^2\lambda_a\) under original-space RevIN. The condition
numbers are therefore the ratios of the largest and smallest block
eigenvalues, proving
\cref{eq:tp_block_condition_number,eq:raw_block_condition_number}.
Equal \(\pi_a\) and \(\lambda_a\) reduce these expressions to
\cref{eq:exact_scale_conditioning}.
\end{proof}

\begin{remark}[Scope of the curvature claim]
\label{rem:hessian_scope}
TP-RevIN removes the explicit scale weighting
\begin{equation}
s_n^2
\mathbf{J}_n^\top
\mathbf{J}_n
\label{eq:removed_scale_curvature}
\end{equation}
from the Gauss--Newton geometry. It does not universally minimize the
condition number of every nonlinear forecasting problem. Scale
weighting may occasionally compensate for pre-existing curvature
anisotropy. The conclusion is therefore that TP-RevIN
removes the component of curvature heterogeneity induced solely by the
RevIN context scales.

\end{remark}

\begin{example}[Scale weighting may leave conditioning unchanged]
\label{ex:aligned_curvature}
Let
\begin{equation}
\mathbf{G}_1
=
\mathbf{G}_2
=
\mathbf{I}_p.
\label{eq:aligned_curvature_matrices}
\end{equation}
Then
\begin{equation}
\mathbf{G}^{\TP}
=
2\mathbf{I}_p,
\qquad
\mathbf{G}^{\Raw}
=
(s_1^2+s_2^2)\mathbf{I}_p.
\label{eq:aligned_curvature_conditioning}
\end{equation}
Both matrices have condition number one, even when
\(s_1\neq s_2\). Thus, scale heterogeneity does not necessarily worsen
conditioning when all samples excite identical isotropic parameter
directions.
\end{example}

\begin{example}[Scale weighting can compensate for intrinsic curvature]
\label{ex:scale_weighting_improves_conditioning}
Let
\begin{equation}
\mathbf{G}_1
=
\begin{bmatrix}
1 & 0\\
0 & 0
\end{bmatrix},
\qquad
\mathbf{G}_2
=
\begin{bmatrix}
0 & 0\\
0 & 100
\end{bmatrix}.
\label{eq:conditioning_counterexample_matrices}
\end{equation}
Then
\begin{equation}
\mathbf{G}^{\TP}
=
\begin{bmatrix}
1 & 0\\
0 & 100
\end{bmatrix},
\qquad
\cond
\left(
\mathbf{G}^{\TP}
\right)
=
100.
\label{eq:tp_conditioning_counterexample}
\end{equation}
Choosing
\begin{equation}
s_1^2=100,
\qquad
s_2^2=1
\label{eq:counterexample_scales}
\end{equation}
gives
\begin{equation}
\mathbf{G}^{\Raw}
=
\begin{bmatrix}
100 & 0\\
0 & 100
\end{bmatrix},
\qquad
\cond
\left(
\mathbf{G}^{\Raw}
\right)
=
1.
\label{eq:raw_conditioning_counterexample}
\end{equation}
This example confirms that universal conditioning superiority cannot be
claimed. TP-RevIN provides scale neutrality, not an unconditional
minimum-condition-number guarantee.
\end{example}

\section{Dataset Details and Volatility Analysis}
\label{sec:dataset}

This section provides a detailed breakdown of the large-scale cryptocurrency forecasting dataset utilized in our empirical evaluations. The dataset consists of historical Open-High-Low-Close (OHLC) observations collected via the Binance API. In total, the dataset contains $15,490,300$ data rows spanning $16$ heterogeneous assets across $7$ distinct temporal resolutions (5-minute, 15-minute, 30-minute, 1-hour, 2-hour, 4-hour, and 1-day intervals), yielding a total of $112$ distinct time-series panels.

\subsection{Dataset Overview and Coverage}

The temporal coverage and active periods for each of the $16$ selected cryptocurrency assets are detailed in Table~\ref{tab:coverage}. The assets cover highly mature coins such as Bitcoin (BTC) and Ethereum (ETH) with histories dating back to 2017, alongside more recently launched tokens such as SUI and TON, which provide evaluation paths for lower-history, high-volatility regimes.

\begin{table}[htbp]
\centering
\caption{Temporal coverage per cryptocurrency asset in the dataset.}
\label{tab:coverage}
\begin{tabular}{lcc}
\toprule
\textbf{Coin} & \textbf{Start Date} & \textbf{End Date} \\
\midrule
ADAUSDT & 2018-04-17 03:30:00 & 2025-06-01 03:25:00 \\
BCHUSDT & 2019-11-28 03:30:00 & 2025-06-01 03:25:00 \\
BNBUSDT & 2017-11-06 03:30:00 & 2025-06-01 03:25:00 \\
BTCUSDT & 2017-08-17 03:30:00 & 2025-06-01 03:15:00 \\
DOGEUSDT & 2019-07-05 03:30:00 & 2025-06-01 03:25:00 \\
ETHUSDT & 2017-08-17 03:30:00 & 2025-06-01 03:25:00 \\
LINKUSDT & 2019-01-16 03:30:00 & 2025-06-01 03:25:00 \\
LTCUSDT & 2017-12-13 03:30:00 & 2025-06-01 03:25:00 \\
PEPEUSDT & 2023-05-05 03:30:00 & 2025-06-01 03:25:00 \\
SHIBUSDT & 2021-05-10 03:30:00 & 2025-06-01 03:25:00 \\
SOLUSDT & 2020-08-11 03:30:00 & 2025-06-01 03:25:00 \\
SUIUSDT & 2023-05-03 03:30:00 & 2025-06-01 03:25:00 \\
TONUSDT & 2024-08-08 03:30:00 & 2025-06-01 03:25:00 \\
TRXUSDT & 2018-06-11 03:30:00 & 2025-06-01 03:25:00 \\
XLMUSDT & 2018-05-31 03:30:00 & 2025-06-01 03:25:00 \\
XRPUSDT & 2018-05-04 03:30:00 & 2025-06-01 03:25:00 \\
\bottomrule
\end{tabular}
\end{table}

The sample density varies across timeframes. Table~\ref{tab:row_counts} outlines the exact number of recorded data rows for each asset across the seven temporal resolutions.

\begin{table}[htbp]
\centering
\caption{Observed data row counts partitioned by coin and time frame.}
\label{tab:row_counts}
\resizebox{\textwidth}{!}{
\begin{tabular}{lrrrrrrr}
\toprule
\textbf{Coin} & \textbf{5m} & \textbf{15m} & \textbf{30m} & \textbf{1h} & \textbf{2h} & \textbf{4h} & \textbf{1d} \\
\midrule
ADAUSDT & 748,156 & 249,388 & 124,699 & 62,357 & 31,188 & 15,602 & 2,602 \\
BCHUSDT & 578,874 & 192,960 & 96,483 & 48,247 & 24,130 & 12,069 & 2,012 \\
BNBUSDT & 794,367 & 264,795 & 132,404 & 66,213 & 33,118 & 16,568 & 2,764 \\
BTCUSDT & 647,474 & 272,542 & 136,277 & 68,149 & 34,086 & 17,052 & 2,845 \\
DOGEUSDT & 620,750 & 206,919 & 103,463 & 51,737 & 25,875 & 12,943 & 2,158 \\
ETHUSDT & 817,609 & 272,542 & 134,789 & 68,149 & 34,086 & 17,052 & 2,845 \\
LINKUSDT & 669,530 & 223,179 & 111,594 & 55,803 & 27,909 & 13,961 & 2,328 \\
LTCUSDT & 783,714 & 261,243 & 130,628 & 65,325 & 32,674 & 16,346 & 2,727 \\
PEPEUSDT & 218,088 & 72,696 & 36,348 & 18,174 & 9,087 & 4,544 & 758 \\
SHIBUSDT & 426,878 & 142,293 & 71,147 & 35,574 & 17,789 & 8,896 & 1,483 \\
SOLUSDT & 505,085 & 168,363 & 84,184 & 42,095 & 21,052 & 10,529 & 1,755 \\
SUIUSDT & 218,736 & 72,912 & 36,456 & 18,228 & 9,114 & 4,557 & 760 \\
TONUSDT & 85,416 & 28,472 & 14,236 & 7,118 & 3,559 & 1,780 & 297 \\
TRXUSDT & 732,226 & 244,078 & 122,044 & 61,030 & 30,525 & 15,271 & 2,547 \\
XLMUSDT & 735,418 & 245,142 & 122,576 & 61,296 & 30,658 & 15,337 & 2,558 \\
XRPUSDT & 743,210 & 247,740 & 123,875 & 61,945 & 30,982 & 15,499 & 2,585 \\
\bottomrule
\end{tabular}
}
\end{table}

\subsection{Empirical Volatility Analysis and Optimization Difficulty}

To understand the optimization challenges across different settings, we analyze the volatility profiles of the dataset. Volatility is defined here as the absolute percentage change in the close price of a candle relative to the preceding candle:
\begin{equation}
    v_t = \frac{|C_t - C_{t-1}|}{C_{t-1}} \times 100\%
\end{equation}
We evaluate the proportion of candles exceeding specified threshold levels ($>1\%$, $>2\%$, $>3\%$, $>4\%$, and $>5\%$).

Table~\ref{tab:total_volatility} reports the aggregate volatility thresholds across all sixteen assets. As the time frame increases, the likelihood of substantial price changes scales non-linearly. For instance, only $0.99\%$ of candles in the $5$-minute resolution experience a price deviation greater than $1\%$, and a mere $0.01\%$ exceed $5\%$. Conversely, in the daily ($1$-day) timeframe, $37.46\%$ of the candles exceed a $1\%$ change, and $11.33\%$ experience fluctuations larger than $5\%$.

\begin{table}[htbp]
\centering
\caption{Aggregate volatility distribution across all assets partitioned by timeframe.}
\label{tab:total_volatility}

\begin{tabular}{lccccc}
\toprule
\textbf{Timeframe} & \textbf{$>1\%$} & \textbf{$>2\%$} & \textbf{$>3\%$} & \textbf{$>4\%$} & \textbf{$>5\%$} \\
\midrule
5m & 0.99\% & 0.15\% & 0.05\% & 0.02\% & 0.01\% \\
15m & 3.10\% & 0.65\% & 0.22\% & 0.10\% & 0.05\% \\
30m & 5.89\% & 1.49\% & 0.56\% & 0.26\% & 0.14\% \\
1h & 9.94\% & 3.07\% & 1.29\% & 0.64\% & 0.35\% \\
2h & 15.45\% & 5.88\% & 2.73\% & 1.47\% & 0.86\% \\
4h & 21.75\% & 10.29\% & 5.43\% & 3.16\% & 1.97\% \\
1d & 37.46\% & 27.27\% & 20.20\% & 15.04\% & 11.33\% \\
\bottomrule
\end{tabular}

\end{table}

This disparity directly aligns with the empirical observations presented in the main paper. As demonstrated in our experimental results, forecasting backbones experience a reduction in predictive accuracy (higher MAE and MAPE metrics) when operating on larger timeframes. The high density of extreme returns on larger temporal resolutions represents a highly volatile dynamical regime with increased variance in the target distributions, presenting a fundamentally more challenging forecasting objective.

\subsection{Asset-Specific Volatility Distributions}

The following tables report the granular, asset-specific volatility distributions across all experimental resolutions. To accommodate the size of the dataset without compilation issues, the records are divided into two distinct parts: Table~\ref{tab:coin_volatility_part1} covers assets from ADAUSDT to LTCUSDT, and Table~\ref{tab:coin_volatility_part2} covers assets from PEPEUSDT to XRPUSDT.

\begin{table}[p]
\centering
\caption{Granular volatility thresholds for assets ADAUSDT through LTCUSDT across experimental resolutions.}
\label{tab:coin_volatility_part1}
\footnotesize
\begin{tabular}{lcccccc}
\toprule
\textbf{Coin} & \textbf{Timeframe} & \textbf{$>1\%$} & \textbf{$>2\%$} & \textbf{$>3\%$} & \textbf{$>4\%$} & \textbf{$>5\%$} \\
\midrule
ADAUSDT & 5m & 0.90\% & 0.11\% & 0.03\% & 0.01\% & 0.01\% \\
ADAUSDT & 15m & 3.35\% & 0.60\% & 0.18\% & 0.07\% & 0.03\% \\
ADAUSDT & 30m & 6.63\% & 1.56\% & 0.51\% & 0.21\% & 0.10\% \\
ADAUSDT & 1h & 11.38\% & 3.40\% & 1.30\% & 0.59\% & 0.30\% \\
ADAUSDT & 2h & 17.65\% & 6.67\% & 3.03\% & 1.55\% & 0.85\% \\
ADAUSDT & 4h & 23.81\% & 11.88\% & 6.14\% & 3.41\% & 2.02\% \\
ADAUSDT & 1d & 39.64\% & 29.10\% & 21.88\% & 16.57\% & 12.61\% \\
\midrule
BCHUSDT & 5m & 0.74\% & 0.10\% & 0.03\% & 0.01\% & 0.01\% \\
BCHUSDT & 15m & 2.61\% & 0.50\% & 0.17\% & 0.06\% & 0.03\% \\
BCHUSDT & 30m & 5.27\% & 1.20\% & 0.46\% & 0.19\% & 0.10\% \\
BCHUSDT & 1h & 9.10\% & 2.68\% & 1.06\% & 0.52\% & 0.27\% \\
BCHUSDT & 2h & 14.70\% & 5.22\% & 2.35\% & 1.21\% & 0.70\% \\
BCHUSDT & 4h & 21.52\% & 9.56\% & 4.89\% & 2.77\% & 1.64\% \\
BCHUSDT & 1d & 37.69\% & 27.00\% & 19.69\% & 14.42\% & 10.89\% \\
\midrule
BNBUSDT & 5m & 0.92\% & 0.18\% & 0.06\% & 0.03\% & 0.01\% \\
BNBUSDT & 15m & 2.72\% & 0.62\% & 0.24\% & 0.11\% & 0.06\% \\
BNBUSDT & 30m & 5.16\% & 1.34\% & 0.55\% & 0.27\% & 0.16\% \\
BNBUSDT & 1h & 8.85\% & 2.72\% & 1.22\% & 0.61\% & 0.36\% \\
BNBUSDT & 2h & 14.09\% & 5.25\% & 2.45\% & 1.36\% & 0.82\% \\
BNBUSDT & 4h & 20.60\% & 9.31\% & 4.80\% & 2.73\% & 1.73\% \\
BNBUSDT & 1d & 36.81\% & 26.28\% & 19.44\% & 14.12\% & 10.46\% \\
\midrule
BTCUSDT & 5m & 0.24\% & 0.03\% & 0.01\% & 0.00\% & 0.00\% \\
BTCUSDT & 15m & 1.46\% & 0.25\% & 0.09\% & 0.04\% & 0.02\% \\
BTCUSDT & 30m & 3.01\% & 0.66\% & 0.21\% & 0.09\% & 0.04\% \\
BTCUSDT & 1h & 5.64\% & 1.48\% & 0.52\% & 0.24\% & 0.12\% \\
BTCUSDT & 2h & 9.74\% & 3.23\% & 1.26\% & 0.58\% & 0.31\% \\
BTCUSDT & 4h & 15.48\% & 6.19\% & 2.87\% & 1.46\% & 0.74\% \\
BTCUSDT & 1d & 33.90\% & 22.15\% & 15.12\% & 10.41\% & 6.93\% \\
\midrule
DOGEUSDT & 5m & 1.30\% & 0.30\% & 0.12\% & 0.07\% & 0.04\% \\
DOGEUSDT & 15m & 3.53\% & 0.94\% & 0.41\% & 0.23\% & 0.14\% \\
DOGEUSDT & 30m & 6.24\% & 1.89\% & 0.84\% & 0.47\% & 0.30\% \\
DOGEUSDT & 1h & 10.03\% & 3.48\% & 1.69\% & 0.93\% & 0.59\% \\
DOGEUSDT & 2h & 15.30\% & 6.15\% & 3.11\% & 1.87\% & 1.23\% \\
DOGEUSDT & 4h & 21.45\% & 10.33\% & 5.59\% & 3.38\% & 2.29\% \\
DOGEUSDT & 1d & 34.96\% & 25.17\% & 18.96\% & 14.51\% & 10.66\% \\
\midrule
ETHUSDT & 5m & 0.66\% & 0.09\% & 0.03\% & 0.01\% & 0.01\% \\
ETHUSDT & 15m & 2.25\% & 0.42\% & 0.12\% & 0.05\% & 0.03\% \\
ETHUSDT & 30m & 4.70\% & 1.06\% & 0.35\% & 0.15\% & 0.07\% \\
ETHUSDT & 1h & 8.41\% & 2.34\% & 0.89\% & 0.41\% & 0.18\% \\
ETHUSDT & 2h & 13.89\% & 4.88\% & 2.12\% & 1.05\% & 0.51\% \\
ETHUSDT & 4h & 20.27\% & 9.23\% & 4.53\% & 2.41\% & 1.33\% \\
ETHUSDT & 1d & 37.62\% & 27.67\% & 20.50\% & 14.94\% & 10.79\% \\
\midrule
LINKUSDT & 5m & 1.22\% & 0.15\% & 0.04\% & 0.02\% & 0.01\% \\
LINKUSDT & 15m & 4.22\% & 0.82\% & 0.23\% & 0.09\% & 0.04\% \\
LINKUSDT & 30m & 8.09\% & 1.94\% & 0.67\% & 0.27\% & 0.12\% \\
LINKUSDT & 1h & 13.38\% & 4.17\% & 1.64\% & 0.76\% & 0.37\% \\
LINKUSDT & 2h & 20.06\% & 8.23\% & 3.64\% & 1.85\% & 1.00\% \\
LINKUSDT & 4h & 26.78\% & 13.93\% & 7.54\% & 4.23\% & 2.44\% \\
LINKUSDT & 1d & 42.16\% & 32.87\% & 25.91\% & 19.55\% & 15.13\% \\
\midrule
LTCUSDT & 5m & 0.85\% & 0.12\% & 0.04\% & 0.02\% & 0.01\% \\
LTCUSDT & 15m & 2.90\% & 0.58\% & 0.18\% & 0.07\% & 0.04\% \\
LTCUSDT & 30m & 5.73\% & 1.39\% & 0.48\% & 0.21\% & 0.10\% \\
LTCUSDT & 1h & 9.99\% & 3.01\% & 1.15\% & 0.52\% & 0.27\% \\
LTCUSDT & 2h & 15.61\% & 5.77\% & 2.56\% & 1.29\% & 0.80\% \\
LTCUSDT & 4h & 22.10\% & 10.31\% & 5.26\% & 2.99\% & 1.79\% \\
LTCUSDT & 1d & 38.59\% & 28.47\% & 20.80\% & 14.82\% & 11.30\% \\
\bottomrule
\end{tabular}
\end{table}

\begin{table}[p]
\centering
\caption{Granular volatility thresholds for assets PEPEUSDT through XRPUSDT across experimental resolutions.}
\label{tab:coin_volatility_part2}
\footnotesize
\begin{tabular}{lcccccc}
\toprule
\textbf{Coin} & \textbf{Timeframe} & \textbf{$>1\%$} & \textbf{$>2\%$} & \textbf{$>3\%$} & \textbf{$>4\%$} & \textbf{$>5\%$} \\
\midrule
PEPEUSDT & 5m & 5.22\% & 0.52\% & 0.14\% & 0.05\% & 0.03\% \\
PEPEUSDT & 15m & 9.48\% & 1.84\% & 0.63\% & 0.28\% & 0.14\% \\
PEPEUSDT & 30m & 14.31\% & 3.95\% & 1.54\% & 0.72\% & 0.39\% \\
PEPEUSDT & 1h & 19.11\% & 7.02\% & 3.26\% & 1.77\% & 1.11\% \\
PEPEUSDT & 2h & 25.26\% & 11.63\% & 6.28\% & 3.64\% & 2.18\% \\
PEPEUSDT & 4h & 30.97\% & 17.96\% & 11.14\% & 7.07\% & 4.75\% \\
PEPEUSDT & 1d & 41.08\% & 34.35\% & 29.46\% & 24.70\% & 20.87\% \\
\midrule
SHIBUSDT & 5m & 1.36\% & 0.30\% & 0.11\% & 0.05\% & 0.02\% \\
SHIBUSDT & 15m & 3.75\% & 0.99\% & 0.43\% & 0.22\% & 0.12\% \\
SHIBUSDT & 30m & 6.71\% & 1.97\% & 0.93\% & 0.50\% & 0.29\% \\
SHIBUSDT & 1h & 10.88\% & 3.62\% & 1.70\% & 1.01\% & 0.64\% \\
SHIBUSDT & 2h & 16.34\% & 6.54\% & 3.28\% & 1.98\% & 1.36\% \\
SHIBUSDT & 4h & 22.25\% & 10.87\% & 5.90\% & 3.87\% & 2.70\% \\
SHIBUSDT & 1d & 37.25\% & 27.53\% & 19.16\% & 14.30\% & 10.59\% \\
\midrule
SOLUSDT & 5m & 1.49\% & 0.25\% & 0.07\% & 0.03\% & 0.02\% \\
SOLUSDT & 15m & 4.84\% & 1.10\% & 0.40\% & 0.16\% & 0.08\% \\
SOLUSDT & 30m & 8.98\% & 2.58\% & 1.04\% & 0.48\% & 0.23\% \\
SOLUSDT & 1h & 14.58\% & 5.16\% & 2.25\% & 1.10\% & 0.63\% \\
SOLUSDT & 2h & 21.18\% & 9.40\% & 4.77\% & 2.74\% & 1.62\% \\
SOLUSDT & 4h & 28.27\% & 15.44\% & 9.10\% & 5.66\% & 3.92\% \\
SOLUSDT & 1d & 40.99\% & 33.41\% & 26.97\% & 21.78\% & 17.45\% \\
\midrule
SUIUSDT & 5m & 1.04\% & 0.11\% & 0.02\% & 0.01\% & 0.01\% \\
SUIUSDT & 15m & 4.55\% & 0.71\% & 0.18\% & 0.05\% & 0.02\% \\
SUIUSDT & 30m & 9.10\% & 2.02\% & 0.63\% & 0.21\% & 0.09\% \\
SUIUSDT & 1h & 15.37\% & 4.78\% & 1.91\% & 0.78\% & 0.34\% \\
SUIUSDT & 2h & 21.73\% & 9.34\% & 4.15\% & 2.16\% & 1.04\% \\
SUIUSDT & 4h & 27.94\% & 15.25\% & 8.65\% & 4.98\% & 3.14\% \\
SUIUSDT & 1d & 38.60\% & 29.78\% & 24.37\% & 20.82\% & 16.86\% \\
\midrule
TONUSDT & 5m & 0.30\% & 0.05\% & 0.01\% & 0.01\% & 0.00\% \\
TONUSDT & 15m & 1.43\% & 0.23\% & 0.06\% & 0.03\% & 0.01\% \\
TONUSDT & 30m & 3.39\% & 0.53\% & 0.18\% & 0.07\% & 0.04\% \\
TONUSDT & 1h & 7.80\% & 1.25\% & 0.46\% & 0.22\% & 0.08\% \\
TONUSDT & 2h & 12.87\% & 3.20\% & 0.93\% & 0.48\% & 0.28\% \\
TONUSDT & 4h & 20.40\% & 6.63\% & 2.53\% & 1.24\% & 0.56\% \\
TONUSDT & 1d & 35.14\% & 23.99\% & 16.55\% & 11.15\% & 7.43\% \\
\midrule
TRXUSDT & 5m & 0.57\% & 0.08\% & 0.02\% & 0.01\% & 0.00\% \\
TRXUSDT & 15m & 2.10\% & 0.40\% & 0.12\% & 0.05\% & 0.02\% \\
TRXUSDT & 30m & 4.10\% & 1.01\% & 0.35\% & 0.15\% & 0.09\% \\
TRXUSDT & 1h & 7.22\% & 2.06\% & 0.91\% & 0.43\% & 0.22\% \\
TRXUSDT & 2h & 11.92\% & 4.24\% & 1.93\% & 1.03\% & 0.60\% \\
TRXUSDT & 4h & 17.73\% & 7.60\% & 3.94\% & 2.21\% & 1.32\% \\
TRXUSDT & 1d & 35.15\% & 23.64\% & 16.34\% & 11.55\% & 8.25\% \\
\midrule
XLMUSDT & 5m & 0.86\% & 0.14\% & 0.04\% & 0.02\% & 0.01\% \\
XLMUSDT & 15m & 2.92\% & 0.60\% & 0.21\% & 0.10\% & 0.05\% \\
XLMUSDT & 30m & 5.65\% & 1.38\% & 0.52\% & 0.26\% & 0.13\% \\
XLMUSDT & 1h & 9.74\% & 2.89\% & 1.25\% & 0.61\% & 0.35\% \\
XLMUSDT & 2h & 15.29\% & 5.49\% & 2.49\% & 1.37\% & 0.81\% \\
XLMUSDT & 4h & 21.71\% & 9.71\% & 4.97\% & 3.05\% & 1.87\% \\
XLMUSDT & 1d & 37.19\% & 26.44\% & 19.12\% & 14.12\% & 10.29\% \\
\midrule
XRPUSDT & 5m & 0.90\% & 0.15\% & 0.05\% & 0.02\% & 0.01\% \\
XRPUSDT & 15m & 2.71\% & 0.64\% & 0.23\% & 0.10\% & 0.06\% \\
XRPUSDT & 30m & 5.05\% & 1.38\% & 0.57\% & 0.28\% & 0.16\% \\
XRPUSDT & 1h & 8.63\% & 2.73\% & 1.19\% & 0.62\% & 0.37\% \\
XRPUSDT & 2h & 13.86\% & 4.95\% & 2.43\% & 1.35\% & 0.79\% \\
XRPUSDT & 4h & 19.87\% & 8.76\% & 4.67\% & 2.91\% & 1.93\% \\
XRPUSDT & 1d & 35.33\% & 24.96\% & 17.61\% & 12.85\% & 10.26\% \\
\bottomrule
\end{tabular}
\end{table}

\section{Experimental Details}
\label{sec:experimental_details}

To facilitate full reproducibility, this section details the concrete training configurations, hyperparameters, and optimization protocols employed across the four primary experimental tables of the manuscript. 

For the evaluation presented in \textbf{Table 1} (comparing models without normalization, standard RevIN, and our proposed TP-RevIN), all backbones are trained strictly using the standard Mean Squared Error (MSE) forecasting objective without the auxiliary physics constraint loss ($\lambda_{\text{phy}} = 0$). The input context window is fixed to $L = 480$ steps. A separate, independent experiment is conducted for each timeframe ($5$m, $30$m) and target forecast horizon ($H \in \{5, 15\}$). All configurations utilize a fixed numerical stabilization constant of $\epsilon = 10^{-5}$ in their normalization denominators. The Full results of Table 1 in the main paper are presented in Table \ref{tab:main}.

The comparison against alternative temporal normalization baselines in \textbf{Table 2} (FAN and SAN) is conducted using the $1$h timeframe with a lookback window of $L = 480$. All models in this evaluation are trained using the MSE objective only, without any auxiliary physics-informed losses. To ensure a fair and optimal implementation of each baseline, we adopt their recommended training schedules:
\begin{itemize}
    \item \textbf{FAN:} The forecasting backbone and the frequency adaptive projection module are optimized jointly for $3$ epochs per experiment.
    \item \textbf{SAN:} A decoupled, two-stage training strategy is adopted. First, the non-stationary slice projection module is optimized independently for $3$ epochs until convergence. This projection module is subsequently frozen, and the main forecasting backbone is trained for $1$ epoch.
    \item \textbf{TP-RevIN:} The model is trained for $1$ epoch using the scale-adaptive dynamic epsilon formulation ($\epsilon^{\text{dyn}} = 10^{-5}(\mu^2 + 10^{-12})$) to handle cross-asset heterogeneity.
\end{itemize}

For the numerical stabilization analysis in \textbf{Table 3} (comparing fixed versus dynamic epsilon), all configurations are evaluated on the $2$h timeframe with an input context length of $L = 480$. The models are trained strictly under the MSE objective to isolate the structural impact of denominator regularization on low-valued assets (e.g., SHIB) versus high-valued assets (e.g., BTC).

For the physical consistency evaluations in \textbf{Table 4}, all models are trained with a lookback window of $L = 480$ across different historical sampling intervals ($30$m and $1$h). Under the standard RevIN configuration, the auxiliary candlestick constraint loss is evaluated in the original physical coordinate space, which reintroduces absolute context scales. All of the evaluations are done under original space for fair comparison.

\begin{table*}[t]
\centering

\resizebox{\textwidth}{!}{%
\begin{tabular}{c c l | c c c | c c c | c c c}
\toprule

\multirow{2}{*}{\textbf{\shortstack{Time\\Frame}}} & \multirow{2}{*}{\textbf{Horizon}} & \multirow{2}{*}{\textbf{Norm Technique}} & 
\multicolumn{3}{c}{\textbf{Time-MoE}} & 
\multicolumn{3}{c}{\textbf{Timer-XL}} & 
\multicolumn{3}{c}{\textbf{Timer}} \\ 
\cmidrule(lr){4-6} \cmidrule(lr){7-9} \cmidrule(lr){10-12}

& & & 
\textbf{MSE} & \textbf{MAE} & \textbf{PHY} & 
\textbf{MSE} & \textbf{MAE} & \textbf{PHY} & 
\textbf{MSE} & \textbf{MAE} & \textbf{PHY} \\
\midrule

\multirow{15}{*}{\textbf{5m}} 
 & \multirow{5}{*}{5}  
 & w/o RevIN & 5.7e+8 & 6325.3 & 1.312 & 5.2e+8 & 6571.64 & 5.536 & 6.15e+8 & 6548.55 & 15.154 \\
 & & CI RevIN  & 9890.9 & 19.03 & 0.0040 & \textcolor{red}{4387.64} & \textcolor{red}{11.66} & 0.012 & 4722.96 & 12.37 & \textcolor{blue}{0.439} \\
 & & CD RevIN  & 9868.58 & 18.93 & 0.0076 & \textcolor{blue}{4485.7} & \textcolor{blue}{11.81} & 0.0050 & 5013.64 & 12.70 & 1.71 \\
 \rowcolor{LightGray} \cellcolor{white} & \cellcolor{white} & Two-Phase CI RevIN & \textcolor{red}{6610.0} & \textcolor{red}{15.25} & \textcolor{blue}{0.001} & 4584.75 & 12.34 & \textcolor{blue}{0.0005} & \textcolor{red}{4510.90} & \textcolor{red}{12.14} & \textcolor{red}{0.278} \\
 \rowcolor{LightGray} \cellcolor{white} & \cellcolor{white} & Two-Phase CD RevIN & \textcolor{blue}{7480.2} & \textcolor{blue}{16.31} & \textcolor{red}{0.00049} & 4811.0 & 12.72 & \textcolor{red}{0.0004} & \textcolor{blue}{4632.05} & \textcolor{blue}{12.30} & 1.09 \\
 \cmidrule{2-12}

 & \multirow{5}{*}{15} 
 & w/o RevIN & 5.7e+8 & 6328.50 & 1.827 & 5.5e+8 & 6572.12 & 5.532 & 6.15e+8 & 6548.99 & 9.523 \\
 & & CI RevIN  & 2.20e+4 & 28.63 & 0.0046 & 1.05e+4 & 18.54 & 0.0159 & 1.06e+4 & 18.69 & \textcolor{blue}{0.915} \\
 & & CD RevIN  & 2.23e+4 & 28.62 & \textcolor{blue}{0.0025} & 1.07e+4 & 18.71 & 0.0061 & 1.19e+4 & 19.73 & 6.285 \\
 \rowcolor{LightGray} \cellcolor{white} & \cellcolor{white} & Two-Phase CI RevIN & \textcolor{red}{1.53e+4} & \textcolor{red}{23.55} & 0.012 & \textcolor{blue}{1.02e+4} & \textcolor{blue}{18.39} & \textcolor{blue}{0.0003} & \textcolor{red}{9949.94} & \textcolor{red}{18.09} & \textcolor{red}{0.390} \\
 \rowcolor{LightGray} \cellcolor{white} & \cellcolor{white} & Two-Phase CD RevIN & \textcolor{blue}{1.95e+4} & \textcolor{blue}{26.83} & \textcolor{red}{0.0010} & \textcolor{red}{1.01e+4} & \textcolor{red}{18.32} & \textcolor{red}{0.0002} & \textcolor{blue}{1.04e+4} & \textcolor{blue}{18.54} & 4.36 \\
 \cmidrule{2-12}

 & \multirow{5}{*}{30} 
 & w/o RevIN & 5.76e+8 & 6327.02 & 2.189 & 5.9e+8 & 6748.77 & 2.32 & 6.15e+8 & 6547.06 & 9.84 \\
 & & CI RevIN  & 3.37e+4 & 34.82 & \textcolor{blue}{0.0016} & 1.74e+4 & 24.07 & 0.011 & 1.74e+4 & 23.94 & \textcolor{blue}{1.38} \\
 & & CD RevIN  & 3.36e+4 & 34.81 & 0.0024 & 1.73e+4 & 23.95 & 0.0071 & 1.99e+4 & 25.81 & 12.70 \\
 \rowcolor{LightGray} \cellcolor{white} & \cellcolor{white} & Two-Phase CI RevIN & \textcolor{red}{2.41e+4} & \textcolor{red}{29.36} & 0.015 & \textcolor{blue}{1.69e+4} & \textcolor{blue}{23.70} & \textcolor{blue}{0.0003} & \textcolor{red}{1.60e+4} & \textcolor{red}{23.10} & \textcolor{red}{0.555} \\
 \rowcolor{LightGray} \cellcolor{white} & \cellcolor{white} & Two-Phase CD RevIN & \textcolor{blue}{3.14e+4} & \textcolor{blue}{33.64} & \textcolor{red}{0.00072} & \textcolor{red}{1.65e+4} & \textcolor{red}{23.44} & \textcolor{red}{0.00003} & \textcolor{blue}{1.73e+4} & \textcolor{blue}{23.94} & 8.90 \\
\midrule\midrule

\multirow{15}{*}{\textbf{30m}} 
 & \multirow{5}{*}{5}  
 & w/o RevIN & 7.73e+8 & 8438.67 & 1.271 & 7.66e+8 & 8367.45 & 0.501 & 7.71e+8 & 8408.11 & 5.717 \\
 & & CI RevIN  & 6.47e+4 & 54.60 & 0.0685 & 3.35e+4 & 37.58 & 0.063 & 3.22e+4 & 36.13 & \textcolor{blue}{1.012} \\
 & & CD RevIN  & 6.33e+4 & 54.20 & 0.0307 & 3.38e+4 & 37.97 & 0.081 & 3.24e+4 & 36.39 & 3.957 \\
 \rowcolor{LightGray} \cellcolor{white} & \cellcolor{white} & Two-Phase CI RevIN & \textcolor{red}{4.67e+4} & \textcolor{red}{45.45} & \textcolor{blue}{0.011} & \textcolor{red}{3.11e+4} & \textcolor{red}{36.04} & \textcolor{blue}{0.003} & \textcolor{blue}{3.03e+4} & \textcolor{blue}{34.93} & \textcolor{red}{0.678} \\
 \rowcolor{LightGray} \cellcolor{white} & \cellcolor{white} & Two-Phase CD RevIN & \textcolor{blue}{5.41e+4} & \textcolor{blue}{49.12} & \textcolor{red}{0.0076} & \textcolor{blue}{3.11e+4} & \textcolor{blue}{36.26} & \textcolor{red}{0.00074} & \textcolor{red}{1.4e+4} & \textcolor{red}{18.54} & 4.365 \\
 \cmidrule{2-12}

 & \multirow{5}{*}{15} 
 & w/o RevIN & 7.73e+8 & 8442.13 & 1.553 & 7.68e+8 & 8383.54 & 0.606 & 7.71e+8 & 8412.84 & 5.542 \\
 & & CI RevIN  & 1.56e+5 & 86.77 & 0.0701 & 8.29e+4 & 59.11 & 0.228 & 7.66e+4 & 57.00 & \textcolor{blue}{3.238} \\
 & & CD RevIN  & 1.55e+5 & 87.05 & \textcolor{blue}{0.0443} & 8.49e+4 & 60.007 & 0.163 & 8.51e+4 & 59.75 & 20.67 \\
 \rowcolor{LightGray} \cellcolor{white} & \cellcolor{white} & Two-Phase CI RevIN & \textcolor{red}{1.14e+5} & \textcolor{red}{72.48} & 0.12 & \textcolor{blue}{7.34e+4} & \textcolor{blue}{55.10} & \textcolor{blue}{0.026} & \textcolor{red}{6.99e+4} & \textcolor{red}{53.38} & \textcolor{red}{1.123} \\
 \rowcolor{LightGray} \cellcolor{white} & \cellcolor{white} & Two-Phase CD RevIN & \textcolor{blue}{1.37e+5} & \textcolor{blue}{80.14} & \textcolor{red}{0.0078} & \textcolor{red}{7.24e+4} & \textcolor{red}{54.71} & \textcolor{red}{0.00052} & \textcolor{blue}{7.36e+4} & \textcolor{blue}{55.11} & 15.002 \\
 \cmidrule{2-12}

 & \multirow{5}{*}{30} 
 & w/o RevIN & 7.74e+8 & 8449.87 & 1.643 & 7.69e+8 & 8392.46 & 0.522 & 7.72e+8 & 8424.90 & 6.923 \\
 & & CI RevIN  & 2.29e+5 & 106.78 & \textcolor{blue}{0.129} & 1.39e+5 & 78.05 & 0.082 & 1.34e+5 & 75.79 & \textcolor{blue}{6.036} \\
 & & CD RevIN  & 2.34e+5 & 107.85 & 0.214 & 1.43e+5 & 78.81 & 0.153 & 1.48e+5 & 80.23 & 39.168 \\
 \rowcolor{LightGray} \cellcolor{white} & \cellcolor{white} & Two-Phase CI RevIN & \textcolor{red}{1.79e+5} & \textcolor{red}{91.87} & 0.562 & \textcolor{red}{1.21e+5} & \textcolor{red}{71.71} & \textcolor{blue}{0.001} & \textcolor{red}{1.16e+5} & \textcolor{red}{70.08} & \textcolor{red}{1.928} \\
 \rowcolor{LightGray} \cellcolor{white} & \cellcolor{white} & Two-Phase CD RevIN & \textcolor{blue}{2.19e+5} & \textcolor{blue}{101.75} & \textcolor{red}{0.0180} & \textcolor{blue}{1.22e+5} & \textcolor{blue}{71.92} & \textcolor{red}{0.001} & \textcolor{blue}{1.26e+5} & \textcolor{blue}{73.74} & 34.545 \\
\midrule\midrule

\multirow{15}{*}{\textbf{1h}} 
 & \multirow{5}{*}{5}  
 & w/o RevIN & 7.8e+8 & 8649.4 & 1.36 & 7.80e+8 & 8610.74 & 0.136 & 7.82e+8 & 8643.78 & 2.214 \\
 & & CI RevIN  & 1.27e+5 & 79.73 & 0.139 & 6.84e+4 & 56.16 & 0.146 & 6.46e+4 & 52.80 & \textcolor{blue}{1.571} \\
 & & CD RevIN  & 1.25e+5 & 79.68 & 0.084 & 6.90e+4 & 56.82 & 0.122 & 6.57e+4 & 52.94 & 4.97 \\
 \rowcolor{LightGray} \cellcolor{white} & \cellcolor{white} & Two-Phase CI RevIN & \textcolor{red}{9.38e+4} & \textcolor{red}{65.91} & \textcolor{red}{0.02} & \textcolor{blue}{6.48e+4} & \textcolor{blue}{54.44} & \textcolor{blue}{0.008} & \textcolor{red}{6.16e+4} & \textcolor{red}{51.25} & \textcolor{red}{1.275} \\
 \rowcolor{LightGray} \cellcolor{white} & \cellcolor{white} & Two-Phase CD RevIN & \textcolor{blue}{1.08e+5} & \textcolor{blue}{71.86} & \textcolor{blue}{0.042} & \textcolor{red}{6.16e+4} & \textcolor{red}{52.55} & \textcolor{red}{0.0032} & \textcolor{blue}{6.34e+4} & \textcolor{blue}{51.83} & 5.247 \\
 \cmidrule{2-12}

 & \multirow{5}{*}{15} 
 & w/o RevIN & 7.85e+8 & 8680.34 & 1.532 & 7.81e+8 & 8629.43 & 0.302 & 7.83e+8 & 8646.39 & \textcolor{red}{1.833} \\
 & & CI RevIN  & 3.05e+5 & 128.34 & 0.296 & 1.73e+5 & 90.42 & 0.455 & 1.62e+5 & 86.06 & 6.20 \\
 & & CD RevIN  & 3.05e+5 & 127.82 & \textcolor{blue}{0.234} & 1.77e+5 & 90.61 & 0.520 & 1.71e+5 & 88.88 & 29.87 \\
 \rowcolor{LightGray} \cellcolor{white} & \cellcolor{white} & Two-Phase CI RevIN & \textcolor{red}{2.31e+5} & \textcolor{red}{108.28} & 0.571 & \textcolor{blue}{1.48e+5} & \textcolor{blue}{82.80} & \textcolor{blue}{0.055} & \textcolor{red}{1.39e+5} & \textcolor{red}{78.92} & \textcolor{blue}{2.365} \\
 \rowcolor{LightGray} \cellcolor{white} & \cellcolor{white} & Two-Phase CD RevIN & \textcolor{blue}{2.75e+5} & \textcolor{blue}{119.51} & \textcolor{red}{0.063} & \textcolor{red}{1.47e+5} & \textcolor{red}{82.38} & \textcolor{red}{0.0042} & \textcolor{blue}{1.48e+5} & \textcolor{blue}{82.25} & 23.58 \\
 \cmidrule{2-12}

 & \multirow{5}{*}{30} 
 & w/o RevIN & 7.85e+8 & 8691.64 & 1.728 & 7.82e+8 & 8647.22 & 0.290 & 7.84e+8 & 8670.49 & \textcolor{red}{2.358} \\
 & & CI RevIN  & 4.21e+5 & 151.30 & \textcolor{blue}{0.509} & 2.80e+5 & 116.25 & 0.299 & \textcolor{blue}{2.58e+5} & \textcolor{blue}{110.42} & \textcolor{blue}{10.22} \\
 & & CD RevIN  & 4.32e+5 & 155.56 & 0.743 & 2.79e+5 & 116.94 & 0.476 & 2.95e+5 & 118.23 & 57.194 \\
 \rowcolor{LightGray} \cellcolor{white} & \cellcolor{white} & Two-Phase CI RevIN & \textcolor{red}{3.26e+5} & \textcolor{red}{130.22} & 2.123 & \textcolor{blue}{2.35e+5} & \textcolor{blue}{105.72} & \textcolor{blue}{0.017} & - & - & - \\
 \rowcolor{LightGray} \cellcolor{white} & \cellcolor{white} & Two-Phase CD RevIN & \textcolor{blue}{3.77e+5} & \textcolor{blue}{141.53} & \textcolor{red}{0.109} & \textcolor{red}{2.33e+5} & \textcolor{red}{105.22} & \textcolor{red}{0.0039} & \textcolor{red}{2.40e+5} & \textcolor{red}{107.54} & 52.350 \\

\bottomrule
\end{tabular}%
}
\caption{Comparison of Models with different Normalization Techniques. Lower numbers indicate lower errors and better performances.\\ Best metrics in each category are colored with \textcolor{red}{Red} and Second best metrics are colored with \textcolor{blue}{Blue}.}
\label{tab:main}
\end{table*}


\end{document}